\newif\ifarxiv
\newcommand{\av}[1]{\textbf{\textcolor{orange}{{av: #1}}}}
\newcommand{\evk}[1]{\textbf{\textcolor{ForestGreen}{{evk: #1}}}}
\newcommand{\pmm}[1]{\textbf{\textcolor{blue}{{pmm: #1}}}}
\newcommand{\ep}[1]{\textbf{\textcolor{red}{{ep: #1}}}}
\renewcommand{\av}[1]{}
\renewcommand{\evk}[1]{}
\renewcommand{\pmm}[1]{}
\renewcommand{\ep}[1]{}
\renewcommand{\bw}{{\boldsymbol{c}}}
\renewcommand{\bg}{{\bw^*}}
\newcommand{\w}{c}
\newcommand{\wdim}{k}
\newcommand{\rsmixtures}{m}
\newcommand{\conceptspace}{{\mathcal{C}}}
\newcommand{\confusionset}{{\mathcal{V}}}
\newcommand{\XORMN}{{\textsf{XOR}} MNIST\xspace}
\newcommand{\XOR}{{\textsf{XOR}}\xspace}
\newcommand{\TLMN}{{\textsf{Traffic Lights}} MNIST\xspace}
\title{Neurosymbolic Reasoning Shortcuts  \\ \ under the Independence Assumption}
\newif\ifuniqueAffiliation
\author{\Name{Emile van Krieken}, \\
\Email{Emile.van.Krieken@ed.ac.uk} \\
\addr University of Edinburgh
\AND
\Name{Pasquale Minervini}$^{*}$, \\ 
\addr University of Edinburgh, Miniml.AI
\AND
\Name{Edoardo Ponti}$^{*}$, \Name{Antonio Vergari}$^{*}$  \\
\addr University of Edinburgh
}%
\newbox{\orcid}\sbox{\orcid}{\includegraphics[scale=0.06]{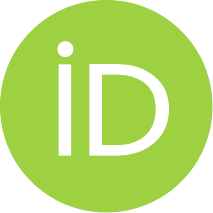}} 
\author[1]{%
	Emile van~Krieken\thanks{\texttt{Emile.van.Krieken@ed.ac.uk}}%
}
\affil[1]{School of Informatics, University of Edinburgh, Edinburgh, UK}
\begin{document}
\maketitle

\begin{abstract}
The ubiquitous independence assumption among symbolic concepts in neurosymbolic (NeSy) predictors is a convenient simplification:
NeSy predictors use it to speed up probabilistic reasoning. 
Recent works like \citet{van2024independence} and \citet{marconatoBEARSMakeNeuroSymbolic2024} 
argued that the independence assumption can hinder learning of NeSy predictors and, more crucially, prevent them from correctly modelling uncertainty.
There is, however, scepticism in the NeSy community about the scenarios in which the independence assumption actually limits NeSy systems \citep{faronius2025independenceissueneurosymbolicai}.
In this work, we settle this question by formally showing that assuming independence among symbolic concepts entails that a model can never represent uncertainty over certain concept combinations. 
Thus, the model fails to be aware of \textit{reasoning shortcuts}, i.e., the pathological behaviour of NeSy predictors that predict correct downstream tasks but for the wrong reasons.  
\end{abstract}

\section{Introduction}
\label{sec:intro}
Neurosymbolic (NeSy) predictors are a class of models that combine neural perception with symbolic reasoning \citep{manhaeveNeuralProbabilisticLogic2021,xuSemanticLossFunction2018,badreddineLogicTensorNetworks2022,feldstein2024mapping,ahmedSemanticProbabilisticLayers2022,de2019neuro,hitzler2022neuro,garcez2023neurosymbolic} to obtain more transparent and reliable ML systems, especially for safety-critical applications \citep{giunchiglia2023road}.
A common recipe is that of \emph{probabilistic} NeSy predictors \citep{xuSemanticLossFunction2018,manhaeveDeepProbLogNeuralProbabilistic2018,ahmedSemanticProbabilisticLayers2022,van2023nesi}. 
First, they use a neural network to extract probabilities for \textit{concepts}, which are symbolic representations of the input. 
Then, it uses probabilistic reasoning \citep{darwiche2002knowledge} over an interpretable symbolic program to predict the final labels.
Used effectively, this can lead to interpretable and reliable AI systems. 

When the data and program together do not constrain the neural network sufficiently, NeSy predictors can learn \emph{reasoning shortcuts} (RSs) \citep{marconatoNotAllNeuroSymbolic2023}. 
RSs are incorrect mappings from the input to concepts that are consistent with the program and the data. 
Unfortunately, when a NeSy predictor learns an RS, it will not generalise out-of-distribution, breaking the promised reliability of NeSy predictors.
What should we do if we cannot avoid RSs? 
\citet{marconatoBEARSMakeNeuroSymbolic2024} argues to make our predictors \emph{aware} of the presence of RSs. 
This comes down to representing uncertainty over all concept assignments consistent with the data.
To highlight this idea, we consider the \XORMN task.

\begin{example}
    \label{ex:xormn}
    In the \XORMN task (\cref{fig:xor_example}) we have pairs of MNIST images of $0$ and $1$. 
    The goal is to model the \XOR function: if the MNIST images represent different digits, the label is $1$, and otherwise it is $0$. 
    Without direct feedback on the digits, we cannot disambiguate between mapping $0$ to $0$ and $1$ to $1$,  or $0$ to $1$ and $1$ to $0$. 
    Instead, an \emph{RS-aware}  model, given images \includegraphics[height=1em]{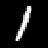} and \includegraphics[height=1em]{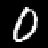}, assigns 0.5 probability to the digit pairs consistent with the data, namely $(0, 1)$ and $(1, 0)$.
\end{example}

\begin{figure}
	\centering
	\includegraphics[width=.9\textwidth]{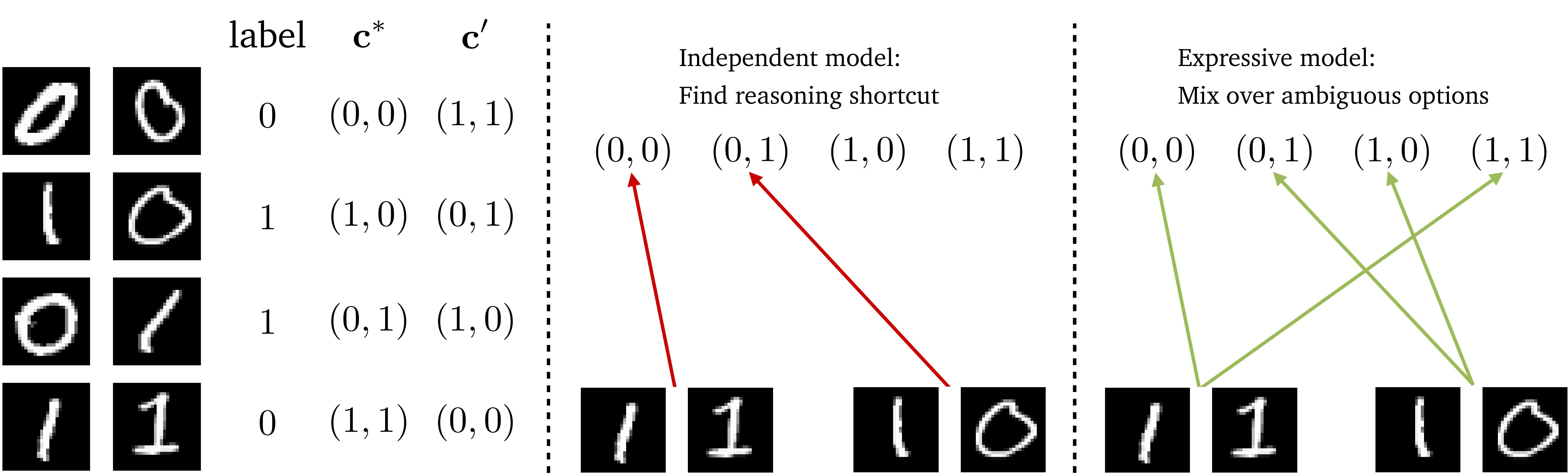}
	\caption{\textbf{Independent models can be overconfident in the presence of reasoning shortcuts.} Left: The \XORMN task contains a reasoning shortcut when we swap $0$'s and $1$'s. Middle: A conditionally independent model learns this reasoning shortcut with high probability, becoming overconfident in the wrong prediction. Right: An expressive model can express uncertainty over the correct prediction with the right loss function. \textbf{This is not possible for independent models.} }
	\label{fig:xor_example}
\end{figure}

Most applications of probabilistic NeSy predictors make a crucial simplifying assumption, namely that the concepts extracted by the neural network are \textit{conditionally independent} \citep{van2024independence}.
This assumption simplifies and speeds up a computation that is, in general, intractable \citep{chavira2008probabilistic}.
Unfortunately, \citet{van2024independence} showed that the independence assumption results in loss functions with disconnected and non-convex loss minima, complicating training, and preventing the model from expressing dependencies between concepts. 
Furthermore, it can bias the solutions towards ``determinism'', i.e., solutions where only a few concepts receive all the probability mass. 
These results were recently questioned by \citet{faronius2025independenceissueneurosymbolicai}, who argue that the setting investigated in \citet{van2024independence} does not reflect typical settings for NeSy predictors. 

We argue both formally and empirically that the independence assumption is also a key limiting factor in such settings:
For example, it is not possible for independent models to be RS-aware in the \XORMN problem of \cref{ex:xormn}. 
They will either randomly get it correct, or find the RS in \cref{fig:xor_example}. 
In particular, we show that the conclusions of \citet{faronius2025independenceissueneurosymbolicai} are confounded by using a problem that does not contain RSs.

\textbf{Contributions.} 
We \textbf{C1)} provide a formalisation of the reasoning shortcut awareness of NeSy predictors in \cref{sec:impossibility_rs_aware}. 
We \textbf{C2)} prove that only in extremely rare cases, a NeSy predictor with the independence assumption can be RS-aware. 
Furthermore, we empirically highlight \textbf{C3)} that expressive models can be RS-aware (\cref{sec:xor}). However, this requires proper architecture and loss design decisions, which we discuss in \cref{sec:disentangling_rs}.

\section{Background: NeSy predictors and Reasoning Shortcuts}
\label{sec:background}
\textbf{Neurosymbolic Predictors.} 
We follow \citet{marconatoNotAllNeuroSymbolic2023}  and consider the discriminative problem of predicting a label $y \in \mathcal{Y}$ from a high-dimensional input $\bx\in \mathcal{X}$. 
We assume access to a set of interpretable and discrete concepts $\conceptspace$ that, without loss of generality, we take to be the set of $\wdim$-dimensional boolean vectors, i.e., $\conceptspace=\{0, 1\}^\wdim$. 
Concepts can be understood as high-level, abstract representations of the input $\bx$. 
However, we assume concepts are \emph{not directly observed} in the data. 
Finally, we assume access to a program $\beta: \conceptspace\rightarrow \mathcal{Y}$ that maps concepts to labels. 
For a given label $y$, we define the constraint $\varphi_y(\bw)=\mathbbone[\beta(\bw)=y]$, which is 1 if and only if the program $\beta$ returns the label $y$ for concept $\bw$. 
We use $\conceptspace_{y}=\{\bw\in \conceptspace: \varphi_{y}(\bw)=1\}$ to denote the set of concepts consistent with $\varphi_{y}$. 

NeSy predictors use a neural network to map from inputs $\bx$ to concepts $\bw$, and then use the program $\beta$ to predict the label $y$. 
One such class of models is the \emph{(probabilistic) neurosymbolic (NeSy) predictors}. 
Probabilistic NeSy predictors use a concept distribution $p_\btheta(\bw\mid\bx)$, often a neural network, to learn what concepts are likely to explain the input. 
Then, the data log-likelihood for a training pair $(\bx, y)$ is given by these equivalent forms:
\begin{equation}
	\label{eq:labelsupervised}
	\log p_\btheta(y\mid \bx) := \log \sum_{\bw\in \conceptspace} p_\btheta(\bw\mid\bx) \knowledge_{y}(\bw)=\log \sum_{\bw\in \conceptspace_y}p_\btheta(\bw\mid \bx).
\end{equation}
\Cref{eq:labelsupervised} is analogous to (multi-instance) partial label learning \citep{NEURIPS2023_1e83498c} or disjunctive supervision \citep{zombori2024towards,faronius2025independenceissueneurosymbolicai}. 
NeSy predictors are evaluated along two axes. 
First, whether they find the input-label mapping, and second, whether they find the \emph{ground-truth concept distribution} $p^*(\bw\mid\bx)$ defined below \citep{bortolottineuro}.

\textbf{Independence assumption.} A common assumption taken in NeSy predictors is a conditional independence assumption in the model:
\begin{equation}
	\label{eq:independence_assumption}
	p^\indep_\btheta(\bw\mid\bx) := \prod_{i=1} ^ \wdim p_\btheta(\w_i\mid\bx)
\end{equation}
Then, this model is used to compute the data log-likelihood in Equation~\ref{eq:labelsupervised}.
Implicitly, most practical NeSy predictors make this assumption \citep{xuSemanticLossFunction2018,badreddineLogicTensorNetworks2022,van2023nesi}. 
When $\wdim>1$, an independent concept distribution cannot represent all possible distributions over $\conceptspace$. 
Furthermore, given a fixed input $\bx$ and label $y$, \citet{van2024independence} proved that the loss function of NeSy predictors  1) is usually highly non-convex, 2) has disconnected minima and 3) biases predictors towards overconfident hypotheses in the absence of evidence. 
All these problems are consequences of the limited expressivity of independent distributions. 

Many probabilistic NeSy predictors  \citep{xuSemanticLossFunction2018,manhaeveDeepProbLogNeuralProbabilistic2018,ahmedSemanticProbabilisticLayers2022} realise \cref{eq:labelsupervised} by performing efficient probabilistic reasoning over some compact logical representation \citep{darwiche2002knowledge}. 
The independence assumption is largely adopted because it greatly simplifies computation that is generally intractable\footnote{We note that this assumption is not necessary to tractably compute \cref{eq:labelsupervised} as shown in \citet{vergari2021compositional} and implemented in \citet{ahmedSemanticProbabilisticLayers2022}, but this is only one tiny model in the vast NeSy literature sea.} \citep{chavira2008probabilistic}. 
Commonly available neural probabilistic logic programming languages like DeepProbLog \citep{manhaeveDeepProbLogNeuralProbabilistic2018,manhaeveNeuralProbabilisticLogic2021}, NeurASP \citep{yang2020neurasp} and Scallop \citep{li2023scallop} also assume independence over the symbols, or more precisely, over the probabilistic facts, in a program. 
Therefore, given a \emph{fixed} program $\beta$ over a \emph{fixed} number of probabilistic facts, there are many ground truth distributions involving dependencies that cannot be captured exactly, no matter the expressivity of the neural network. 
However, these languages are Turing complete probabilistic languages that can potentially  represent any distribution \citep{taisuke1995statistical,poole2022probabilistic,faronius2025independenceissueneurosymbolicai}. 
We resolve this apparent contradiction by noting that this is only possible by \textit{augmenting} the program $\beta$ and introducing additional probabilistic facts. 
For example, we can add latent variables for a mixture model over independent distributions, or structured dependencies via Bayesian networks, thus going beyond the independence assumption as discussed in this paper.
Nevertheless, in practice, and especially when empirically evaluating NeSy predictors over shared benchmarks that provide fixed programs $\beta$ \citep{bortolottineuro}, this program augmentation is not present and \cref{eq:labelsupervised} is realised with \cref{eq:independence_assumption}.

\textbf{Formal problem setup.} 
Next, we describe the formal assumptions behind how the data is generated, following \citet{marconatoNotAllNeuroSymbolic2023}. 
In particular, we use the following \emph{ground-truth generative process} of the data: 
\begin{equation}
	\label{eq:generative}
	p^*(\bs, \bw, \bx, y) := p^*(\bs) p^*(\bw) p^*(\bx\mid\bs, \bw) p^*(y\mid\bw). 
\end{equation}
Here, $\bs\in \mathbb{R}^S$ is a set of non-interpretable continuous style variables. 
Furthermore, we assume $p^*(y\mid \bw)$ is known to the learner. 
Marginalising out $\bs$ and performing Bayes' rule, we obtain the ground-truth label distribution as:
\begin{equation}
	\label{eq:label-distribution}
	p^*(y\mid\bx) = \sum_{\bw\in\conceptspace}  p^*(\bw\mid\bx) p^*(y\mid\bw),
\end{equation}
where $p^*(\bw\mid\bx) = \frac{\mathbb{E}_{\bs}[p^*(\bx, \bs\mid\bw)]p^*(\bw)}{p^*(\bx)}$. 
We consider two additional optional assumptions on the ground-truth process in \cref{eq:generative}, taken from \citet{marconatoNotAllNeuroSymbolic2023}:
\begin{itemize}
	\item \textbf{Assumption A1}: There is an unknown oracle $f$ such that $f(\bw, \bs)=\bx$ for some $\bx\in\mathcal{X}$. Furthermore, $f$ is smooth in $\bs$ and invertible in $\bw$. By abuse of notation, we use $f^{-1}$ to denote this inverse.  Then, given some $\bx$, we can use $f^{-1}$ to obtain the ground-truth world $\bw^*$ that represents $\bx$. Consequently, $p^*(\bw\mid\bx)$ is a deterministic distribution. 
	\item \textbf{Assumption A2}: Associated to each ground-truth world $\bw$ is a unique label $y\in \mathcal{Y}$ found with some function $\beta(\bw)=y$. Consequently, $p^*(y\mid\bc)$ is a deterministic distribution. 
\end{itemize}
Note that if both assumptions \textbf{A1} and \textbf{A2} hold, then $p^*(y\mid\bx)$ is a deterministic distribution where the label is given as $y=\beta(f^{-1}(\bx))$. 
If at least assumption $\textbf{A2}$ holds, then \cref{eq:label-distribution} can be rewritten as:
\begin{equation}
	\label{eq:label-distribution_a2}
	p^*(y\mid\bx) = \sum_{\bw\in\conceptspace} p^*(\bw\mid\bx) \knowledge_y(\bw) ,
\end{equation}
Considering \cref{eq:label-distribution_a2}, probabilistic NeSy predictors (\cref{eq:labelsupervised}) are a natural choice for this problem.

\textbf{Reasoning shortcuts.}
A central issue in training NeSy predictors is detecting and mitigating \emph{reasoning shortcuts}, which is when we learn the input-label mapping without properly learning the ground-truth concept distribution. 
\begin{definition}
\label{def:reasoning-shortcut}
Assume the ground-truth distribution $p^*$ where assumptions \textbf{A1} and \textbf{A2} hold. 
A function $\alpha: \conceptspace\rightarrow \conceptspace$ is a \emph{(possible) concept remapping} if 
\begin{equation}
	\label{eq:reasoning-shortcut}
		\forall \bg\in \mathsf{supp}^*(\bw): \beta(\alpha(\bg)) = \beta(\bg),
\end{equation}
where $\mathsf{supp}^*(\bw)$ is the support of the distribution $p^*(\bw)$. 
If $\alpha$ is not the identity function, we say it is a \emph{reasoning shortcut (RS)}.\footnote{Our definition differs from \citet{marconatoNotAllNeuroSymbolic2023} for readability and focus. We only consider deterministic RSs and true risk minimisers rather than log-likelihood minimisers.}
Associated to each concept remapping $\alpha$ is a \emph{concept remapping distribution} $p_\alpha(\bw\mid\bx):= \mathbbone[\alpha(f^{-1}(\bx))=\bw]$. 
\end{definition}

Importantly, RSs $\alpha$ and their corresponding concept distribution $p_\alpha(\bw\mid\bx)$ solve the input-label mapping. 
For each $\bx\in \mathcal{X}$, $p_\alpha(y\mid\bx)$ deterministically returns $\beta(\alpha(f^{-1}(\bx)))$. By \cref{eq:reasoning-shortcut}, we have $\beta(\alpha(f^{-1}(\bx)))=\beta(f^{-1}(\bx))=y$, which is what the ground-truth label distribution $p^*(y\mid\bx)$ returns under \textbf{A1} and \textbf{A2}. 
However, the concept distribution is incorrect: non-identity mappings $\alpha$ will return a different concept $\bw$ for some $\bx\in \mathcal{X}$.

When training, we ideally converge on the ground-truth concept distribution $p^*(\bw\mid\bx)$, and not an RS $p_\alpha(\bw\mid\bx)$. 
Unfortunately, although techniques like specialised architectures or multitask learning can help, the most reliable method for finding $p^*$ is to provide costly supervision data directly on the concepts \citep{marconatoNotAllNeuroSymbolic2023}. 
Instead, BEARS \citep{marconatoBEARSMakeNeuroSymbolic2024} argues that for settings where we cannot disambiguate the ground-truth concept distribution from the RSs, we should be \emph{aware} of the RSs.  
BEARS implements RS-awareness using an ensemble of NeSy predictors with the independence assumption to learn a collection of concept remappings. 
BEARS does \emph{not} take the independence assumption (\cref{eq:independence_assumption}) as it mixes multiple conditionally independent models. 

\section{When is the independence assumption appropriate?}
\label{sec:xor}
With the basic analysis tools under our belt, we discuss if the limited expressivity of the independence assumption is appropriate for NeSy predictors. 
When assumption \textbf{A1} does not hold, the independence assumption limits us:
Assumption \textbf{A1} allows us to completely derive the ground-truth concepts $\bg$ from the input $\bx$. 
However, many settings violate assumption \textbf{A1}, such as under partial observability like self-driving cars, where part of the vision is blocked, or when planning under incomplete information. 
Then, the ground-truth concept distribution $p^*(\bw\mid\bx)$ is not factorised, and NeSy predictors with the independence assumption cannot capture $p^*$. 

When both assumptions \textbf{A1} and \textbf{A2} are satisfied, the ground-truth concept distribution $p^*(\bw\mid\bx)$ is a deterministic mapping which can be \emph{expressed} by conditionally independent distributions. 
So is checking for these two assumptions enough?
No, as sufficient expressivity does not mean we actually \emph{learn} the ground-truth concept distribution. 

\subsection{The independence assumption fails under reasoning shortcuts}
\begin{figure}
	\centering
	\includegraphics[width=\textwidth]{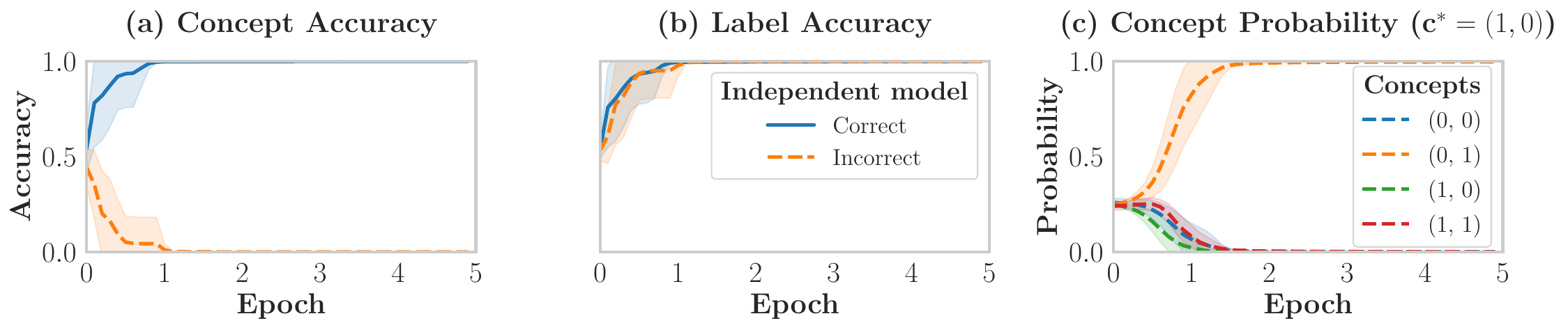}
	\caption{\textbf{Models with independence assumption are highly confident in the XOR reasoning shortcut.} Test concept and label accuracy during training for a conditionally independent model. 
    We group the runs by whether they find the correct solution (solid line) or the reasoning shortcut (dashed line).  }
	\label{fig:parity_experiment_independent}
\end{figure}
Next, we provide a simple example where models using the independence assumption fail to learn a calibrated ground-truth concept distribution. 
In particular, we implement \XORMN from \cref{ex:xormn}, which has a reasoning shortcut for independent models. 
We use the program $\beta(\bw)=\w_1\wedge \neg \w_2 \vee \neg \w_1 \wedge \w_2$ and the LeNet model for the concept distribution $p^\indep_\btheta(\bw\mid\bx)$. We fit the model by maximising \cref{eq:labelsupervised}. For more details, see \cref{appendix:experimental-details}. 

As shown in \cref{fig:parity_experiment_independent}, all 20 runs solve the input-label mapping. However, 11 out of 20 runs learn the reasoning shortcut that maps all MNIST digits of 1 to 0, and MNIST digits of 0 to 1. 
Indeed, the theory of RSs gives only this remapping as an RS for this problem \citep{marconatoNotAllNeuroSymbolic2023}\footnote{\label{footnote:disentangled}This assumes independent models with \emph{disentangled architectures}. For problems involving two MNIST digits, \citet{marconatoNotAllNeuroSymbolic2023} understands this as architectures that use a single neural network for classifying both MNIST digits, thereby incorporating parameter sharing.}.
There is not enough information to disambiguate between the ground-truth concept distribution and the RS; both obtain perfect data-likelihoods. 

How would we achieve an RS-aware predictor in this task for an independent model? 
Given any input, the marginal distribution of both digits should assign 0.5 probability to $1$ and 0.5 probability to $0$. 
Therefore, we assign $0.25$ probability to each combination $\{(0, 0), (0, 1), (1, 0), (1, 1)\}$. 
But then we assign $0.5$ probability to both labels $y=1$ and $y=0$, meaning we no longer solve the input-label mapping. 
Therefore, independent models cannot express the RS-aware concept distribution. 

\subsection{Can we train RS-aware expressive models?}
\label{sec:train-rs-aware-xor}
\begin{figure}
	\centering
	\includegraphics[width=\textwidth]{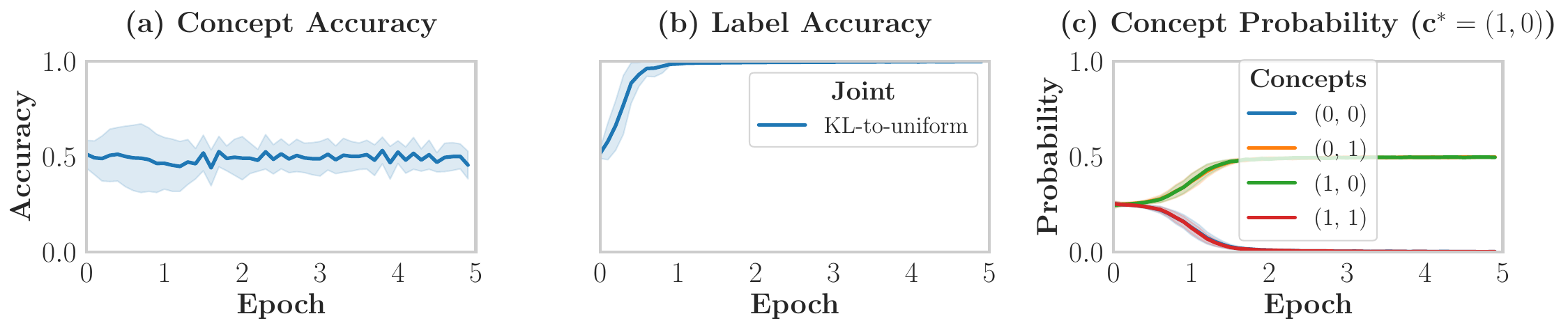}
	\caption{\textbf{Expressive models can solve the input-label problem while being RS-aware.} See Figure \ref{fig:parity_experiment_independent} for legend details. }
	\label{fig:parity_experiment_joint}
\end{figure}
If independent models are unable to represent uncertainty in the \XORMN task, then how about more expressive models? 
From an expressivity perspective, the answer is yes. 
But can we also \emph{learn} a calibrated, reasoning shortcut aware model $p_\btheta(\bw\mid \bx)$? 

As shown in \citet{faronius2025independenceissueneurosymbolicai,zombori2024towards}, expressive models also exhibit a bias towards determinism during training. 
However, clever loss functions can circumvent this issue \citep{zombori2024towards}. 
We minimise the KL-divergence to the uniform distribution over possible concepts. 
Given a pair $(\bx, y)$ \citep{mendez2024semantic},  %
\begin{equation}\label{eq:conditionalentropy}
	\text{KL}\left[q(\bw\mid y)\ ||\ p_\btheta(\bw\mid\bx) \right] = 
	-\frac{1}{|\conceptspace_{y}|} \sum_{\bw\in \conceptspace_y} \log |\conceptspace_{y}|\cdot p_\btheta(\bw\mid\bx),
\end{equation}
where $q(\bw\mid y) := \frac{\knowledge_y(\bw)}{|\conceptspace_{y}|}$ is the uniform distribution over concepts consistent with the label $y$. 
We adapt the LeNet model by horizontally joining the two MNIST images, and change the output layer to 4 classes corresponding to each concept combination. 

In \cref{fig:parity_experiment_joint}, we see that the model also learns the input-label mapping, but also learns to assign 0.5 probability to the two valid concept combinations $(0, 1)$ and $(1, 0)$ when the input digits are different. 
Therefore, the model learned the RS-aware concept distribution. 
We emphasise that this is only possible because the model does not have the independence assumption. 
In fact, as we show in \cref{appendix:further-plots}, using \cref{eq:conditionalentropy} for the independence assumption fails to learn \emph{anything} beyond the model's initialisation.

\section{Models with independence cannot be reasoning shortcut aware}
\label{sec:impossibility_rs_aware}
The above example motivates a closer study of how reasoning shortcuts are related to the independence assumption. 
In particular, we consider RS-awareness, the ability of a model to express uncertainty over multiple different RSs. 
We use the formal setup from \cref{sec:background} and take assumptions \textbf{A1} and \textbf{A2}. 
We assume a conditional constraint $\knowledge_y$. 
Associated to each input $\bx$ is a ground-truth world $\bg=f^{-1}(\bx)$ and a unique label $y=\beta(\bg)$. 
Recall that RSs are non-identity concept remappings $\alpha: \conceptspace\rightarrow \conceptspace$ which map ground-truth worlds $\bg$ to potentially different worlds $\bw$ using concept distributions $p_\alpha(\bw\mid\bx)  := \mathbbone[\bw = \alpha(f^{-1}(\bx))]$.  
We emphasise that $p_\alpha$ is unknown, as it requires the oracle $f$.

If the ground-truth concept distribution cannot be identified from the RSs, we want to be RS-aware by expressing uncertainty over multiple concept remappings. 
Proposition 3 of \citet{marconatoNotAllNeuroSymbolic2023} shows \emph{all} concept distributions $p(\bw\mid\bx)$ solving the input-label mapping are convex combinations of a set of concept remapping distributions $p_\alpha$. 

\begin{definition}
\label{def:rs-mixture}
	Let $\mathcal{A}=\{\alpha_1, ..., \alpha_{\rsmixtures}\}$ be a set of $\rsmixtures$ concept remappings and let $\bpi\in \Delta^{\rsmixtures-1}$, where $\Delta^{\rsmixtures-1}$ is the $(\rsmixtures-1)$-dimensional probability simplex, which is the set of all probability distributions over $\rsmixtures$ elements. 
	We define a \emph{reasoning shortcut mixture} $p_{\mathcal{A}, \bpi}$ as
	\begin{equation}
		\label{eq:mixture-rs}
		p_{\mathcal{A}, \bpi}(\bw\mid\bx) := \sum_{i=1}^\rsmixtures \pi_i p_{\alpha_i}(\bw\mid\bx).
\end{equation}
\end{definition}
We next give a precise definition of RS-awareness, which we understand as the ability to represent these RS mixtures. 
We distinguish \textit{complete mixing}, which is when all mixtures can be expressed, and \textit{weak mixing}, which only requires representing a single mixture:
\begin{definition}
	Let $\mathcal{A}$ be a set of concept remappings. 
	We say a model class $p_\btheta(\bw\mid\bx)$, $\btheta\in \Theta$ is \emph{completely reasoning shortcut aware} over $\mathcal{A}$ if for all $\bpi\in \Delta^{k-1}$, there is a parameter $\btheta$ such that $p_\btheta(\bw\mid\bx)=p_{\mathcal{A}, \bpi}(\bw\mid\bx)$. 
	Furthermore, a model class is \emph{(weakly) reasoning shortcut aware over $\mathcal{A}$} if it can represent $p_{\mathcal{A}, \bpi}(\bw\mid\bx)$ for some $\bpi\in \Delta^{k-1}$, where $0<\pi_i < 1$ for all $i$.  
\end{definition}

Our goal is to study under what conditions models with the independence assumption are RS-aware. 
Therefore, we must define precisely what we mean with this model class. 
For simplicity, we take the most powerful possible model class for the independence assumption. 
In realistic scenarios, we can only approximate this class. 
\begin{definition}
	The \emph{universal conditionally independent (UCI)} model class is the set of all distributions formed from any function $\mu: \mathcal{X}\rightarrow [0, 1]^\wdim$ as 
	\begin{equation}
		\label{eq:universal-ci}
		p^\indep_\mu(\bw\mid\bx) := \prod_{i=1}^\wdim \mu(\bx)_i^{\w_{i}} (1-\mu(\bx)_i)^{1-\w_{i}}.
	\end{equation}
    \label{def:uni-indep-distrib}
\end{definition}
All concept remapping distributions $p_\alpha$ fall within the UCI class by setting $\mu=\alpha(f^{-1}(\bx))$. 
How about RS mixtures $p_{\mathcal{A}, \bpi}$? 
UCI models are only very rarely weakly RS-aware. 
To make this precise, we recall relevant definitions from \citet{van2024independence}. 

\begin{definition}
	An \emph{incomplete concept} $\bw_D$ assigns values $\{0, 1\}$ to a subset of the variables in $\bw$ indexed by $D\subseteq [\wdim]$. The \emph{cover} $\conceptspace_{\bw_D}\subseteq \conceptspace$ of $\bw_D$ is the set of all concepts $\bw$ that agree with $\bw_D$ on the variables in $D$. 
	Let $\knowledge_y: \conceptspace\rightarrow \{0, 1\}$ be the constraint induced by label $y$. An incomplete concept $\bw_D$ is an \emph{implicant} of $\knowledge_y$ if and only if for all concepts $\bw$ in the cover $\conceptspace_{\bw_D}$, the constraint holds, that is, $\knowledge_y(\bw)=1$. 
\end{definition}
To formalise the link between implicants and RSs, we introduce \emph{confusion sets}, the set of concepts that a ground truth concept $\bg$ gets remapped into using concept remappings $\mathcal{A}$:
\begin{definition}
	Given a set of concept remappings $\mathcal{A}$ and a ground-truth world $\bg\in \text{supp}^*(\bw)$, the \emph{confusion set} $\confusionset_{\bg}$ is $\confusionset_{\bg} = \{\alpha(\bg) \mid \alpha\in \mathcal{A}\}\subseteq \conceptspace_y$. 
\end{definition}
We next obtain a necessary condition for the UCI model class to be (weakly) RS-aware over a set of RSs, and we provide some examples.  All proofs are available in \cref{appendix:proofs}. 

\begin{theorem}
	\label{thm:ici-mixer}
	Given a set of concept remappings $\mathcal{A}$, if the UCI model class is (weakly) RS-aware over $\mathcal{A}$, then for all ground-truth worlds $\bg\in \text{supp}^*(\bw)$, the confusion set $\confusionset_{\bg}$ is equal to the cover of an implicant $\implicant$ of $\knowledge_{y}$, $y=\beta(\bg)$.  
\end{theorem}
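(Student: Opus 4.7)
The plan is to analyze, for each fixed input $\bx$, the \emph{support} of both sides of the equality $p^\indep_\mu(\bw\mid\bx) = p_{\mathcal{A},\bpi}(\bw\mid\bx)$ guaranteed by weak RS-awareness, and to show that on the independent side the support is always the cover of an incomplete concept, while on the mixture side the support is exactly the confusion set $\confusionset_{\bg}$ (because every $\pi_i>0$).

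\textbf{Step 1: unpack weak RS-awareness.} By hypothesis, there exists some $\mu:\mathcal{X}\to[0,1]^{\wdim}$ and some $\bpi\in\Delta^{\rsmixtures-1}$ with $0<\pi_i<1$ for all $i$ such that, for every $\bx\in\mathcal{X}$, $p^\indep_\mu(\bw\mid\bx) = \sum_{i=1}^{\rsmixtures}\pi_i\,\mathbbone[\bw=\alpha_i(f^{-1}(\bx))]$. Fix any $\bg\in\mathsf{supp}^*(\bw)$ and pick an $\bx$ with $f^{-1}(\bx)=\bg$ (which exists by \textbf{A1}). Since all mixture weights are strictly positive, the support of the right-hand side is exactly $\confusionset_{\bg}=\{\alpha_1(\bg),\ldots,\alpha_{\rsmixtures}(\bg)\}$.

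\textbf{Step 2: characterize the UCI support as an implicant cover.} From \cref{eq:universal-ci}, the support of $p^\indep_\mu(\,\cdot\mid\bx)$ is determined coordinatewise: let $D=\{i\in[\wdim]:\mu(\bx)_i\in\{0,1\}\}$ and define the incomplete concept $\bw_D$ by $(\bw_D)_i=\mu(\bx)_i$ for $i\in D$. Coordinates $i\notin D$ are genuinely random with both Bernoulli outcomes carrying positive mass, while coordinates in $D$ are forced. Thus $\mathsf{supp}\,p^\indep_\mu(\,\cdot\mid\bx)=\conceptspace_{\bw_D}$, the cover of $\bw_D$. Combined with Step~1, this yields $\confusionset_{\bg}=\conceptspace_{\bw_D}$.

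\textbf{Step 3: verify $\bw_D$ is an implicant of $\knowledge_y$.} By \cref{def:reasoning-shortcut}, each $\alpha_i$ satisfies $\beta(\alpha_i(\bg))=\beta(\bg)=y$, so $\confusionset_{\bg}\subseteq\conceptspace_y$. The equality from Step~2 therefore forces $\conceptspace_{\bw_D}\subseteq\conceptspace_y$, i.e.\ $\knowledge_y(\bw)=1$ for every $\bw$ in the cover of $\bw_D$. This is exactly the definition of an implicant. Since $\bg\in\mathsf{supp}^*(\bw)$ was arbitrary, this completes the argument.

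\textbf{Main obstacle.} The only subtlety I anticipate is the bookkeeping around the boundary cases $\mu(\bx)_i\in\{0,1\}$: one must justify that these are the \emph{only} way for a product of Bernoullis to have a non-full support, and that the resulting support is precisely a combinatorial ``box'' $\conceptspace_{\bw_D}$. This is essentially immediate from the factorized form of \cref{eq:universal-ci}, but it is the single place where the expressivity limitation of the independence assumption is actually used—the rest of the argument is a clean translation between supports and definitions.
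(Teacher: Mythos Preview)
Your proposal is correct and follows essentially the same approach as the paper: match the support of the RS mixture (which is $\confusionset_{\bg}$ since all $\pi_i>0$) to the support of the independent distribution, observe the latter is always the cover of an incomplete concept, and conclude it is an implicant because $\confusionset_{\bg}\subseteq\conceptspace_y$. The only difference is that the paper outsources your Step~2 and part of Step~3 to Theorem~4.3 of \citet{van2024independence}, whereas you give the elementary coordinatewise argument directly; your version is self-contained and arguably cleaner.
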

\begin{example}
	\XORMN has the concept remappings $\alpha_1(\bw)=\bw$ (identity), and $\alpha_2(0, 0)=(1, 1)$, $\alpha_2(1, 1)=(0, 0)$, $\alpha_2(0, 1)=(1, 0)$, $\alpha_2(1, 0)=(0, 1)$ ($\alpha_2$ is the only RS for disentangled architectures, see \cref{footnote:disentangled}). Therefore, the different confusion sets are:
	\begin{align*}
		\confusionset_{(0, 0)} &= \{(0, 0), (1, 1)\}  \quad
		\confusionset_{(1, 0)} = \{(0, 1), (1, 0)\} \\
		\confusionset_{(0, 1)} &= \{(0, 1), (1, 0)\}  \quad
		\confusionset_{(1, 1)} = \{(0, 0), (1, 1)\}
	\end{align*}
	For $\knowledge_0$, the implicant covers are $\{(0, 0)\}$ and $\{(1, 1)\}$. Note that neither of these covers are equal to the confusion sets $\confusionset_{(0, 0)}$ and $\confusionset_{(1, 1)}$. 
	Similarly, for $\knowledge_1$, the implicant covers are $\{(0, 1)\}$ and $\{(1, 0)\}$, none of which are equal to the confusion sets $\confusionset_{(0, 1)}$ and $\confusionset_{(1, 0)}$. 
	Therefore, by \cref{thm:ici-mixer}, the UCI model class cannot be (weakly) RS-aware for the \XORMN problem. 
\end{example}
What about a positive example? 
For $\wdim=2$, a problem with RSs that independent models can mix over is the following:
\begin{example}
	\label{ex:mixable}
	Let $\beta(\bw) = \w_1$. Note that  $\{(1, 0), (1, 1)\}$ and $\{(0, 0), (0, 1)\}$ are implicant covers of $\knowledge_1$ and $\knowledge_0$ respectively. 
	The concept remappings are $\alpha_1(\bw) = \bw$ and $\alpha_2(\bw) = (\w_1, \neg \w_2)$, as we have no evidence to disambiguate the value of $\w_2$. 
	The corresponding confusion sets therefore are:
	\begin{align*}
		\confusionset_{(0, 0)} &= \{(0, 0), (0, 1)\}  \quad
		\confusionset_{(1, 0)} = \{(1, 0), (1, 1)\} \\
		\confusionset_{(0, 1)} &= \{(0, 0), (0, 1)\}  \quad
		\confusionset_{(1, 1)} = \{(1, 0), (1, 1)\}
	\end{align*}
	Each of these are equal to the implicant covers of $\knowledge_1$ and $\knowledge_0$, meaning the necessary condition of Theorem~\ref{thm:ici-mixer} is satisfied. 
	Given some probability $\pi$ of preferring $\alpha_1$ over $\alpha_2$, 
	the UCI model class represents $p_{\mathcal{A}, \bpi}(\bw\mid\bx)$ with $\mu(\bx) = (f^{-1}(\bx)_1, \pi)$.
\end{example}
This example  may seem dull --- The program $\beta$ gives perfect supervision for the first variable, and none for the second. 
But that is precisely the condition of \cref{thm:ici-mixer}. 
When the confusion set is equal to an implicant cover of $\bw_D$, this means it agrees on $|D|\leq \wdim$ of the variables. 
In other words, this is when we have \emph{partial supervision} over some of the variables in $\bw$. 
For practical NeSy setups, such programs are not meaningful. 

Interestingly, for \cref{ex:mixable}, UCI is not just weakly RS-aware, but also completely RS-aware. 
This requires a stronger condition, which is both necessary and sufficient:
\begin{theorem}
\label{thm:no-rs-awareness}
	Let $\mathcal{A}$ be a set of concept remappings. Then the UCI model class is completely reasoning shortcut aware over $\mathcal{A}$ if and only if for all $\bg\in \textnormal{supp}^*(\bw)$, $\confusionset_{\bg}$ is a singleton, or $\confusionset_{\bg}=\{\bw_1, \bw_2\}$, where $\bw_1$ and $\bw_2$ differ in exactly one variable.  
\end{theorem}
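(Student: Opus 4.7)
The argument hinges on a decoupling across ground-truth worlds: since $\mu$ may be any function of $\bx$ and $p_{\mathcal{A},\bpi}(\bw\mid\bx)$ depends on $\bx$ only through $\bg=f^{-1}(\bx)$, the UCI class is completely RS-aware if and only if, for every $\bg\in\mathsf{supp}^*(\bw)$ and every $\bpi\in\Delta^{\rsmixtures-1}$, the mixture---which places mass $\sum_{i:\alpha_i(\bg)=\bw}\pi_i$ on each $\bw\in\confusionset_{\bg}$ and $0$ elsewhere---coincides with some product distribution $p^\indep_{\mu(\bx)}$. The structural fact I exploit is that the support of any $p^\indep_\mu$ on $\{0,1\}^\wdim$ is the cover of some incomplete concept, and hence has cardinality equal to a power of two; in particular, a two-element support must consist of concepts differing in exactly one coordinate.

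\textbf{Necessity (contrapositive).} Suppose some $\confusionset_{\bg^*}$ violates the stated condition. If $|\confusionset_{\bg^*}|\geq 3$, I pick three distinct images $\bw_1,\bw_2,\bw_3\in\confusionset_{\bg^*}$, choose indices $i_1,i_2,i_3$ with $\alpha_{i_j}(\bg^*)=\bw_j$, and set $\pi_{i_1}=\pi_{i_2}=\pi_{i_3}=\tfrac{1}{3}$ with the remaining entries zero; the mixture at $\bg^*$ is then supported on exactly three points, which is impossible for a product distribution by the structural fact. If instead $\confusionset_{\bg^*}=\{\bw_1,\bw_2\}$ with $\bw_1,\bw_2$ differing in at least two coordinates, I pick $i_1,i_2$ with $\alpha_{i_j}(\bg^*)=\bw_j$ and set $\pi_{i_1}=\pi_{i_2}=\tfrac{1}{2}$; the resulting two-element support again fails the cover-of-an-incomplete-concept shape.

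\textbf{Sufficiency (construction).} Fix $\bpi\in\Delta^{\rsmixtures-1}$. For each $\bg$ I construct $\mu(\bx)$ explicitly. If $\confusionset_{\bg}=\{\bw^\circ\}$, I set $\mu(\bx)_i=w^\circ_i\in\{0,1\}$, making the product distribution deterministic on $\bw^\circ$. If $\confusionset_{\bg}=\{\bw_1,\bw_2\}$ differing only at coordinate $j$ (say $w_{1,j}=0$, $w_{2,j}=1$), I set $\mu(\bx)_i=w_{1,i}=w_{2,i}\in\{0,1\}$ for $i\neq j$ and $\mu(\bx)_j=\sum_{i:\alpha_i(\bg)=\bw_2}\pi_i$. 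A direct expansion of \cref{eq:universal-ci} then gives $p^\indep_{\mu(\bx)}(\bw_k\mid\bx)=\sum_{i:\alpha_i(\bg)=\bw_k}\pi_i$ for $k\in\{1,2\}$, and $p^\indep_{\mu(\bx)}(\bw\mid\bx)=0$ for $\bw\notin\confusionset_{\bg}$, since any such $\bw$ must disagree with the common value at some coordinate $i\neq j$ where $\mu(\bx)_i$ is deterministic.

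\textbf{Main obstacle.} The only delicate step is the sub-cube characterisation of product-distribution supports, which simultaneously rules out supports of size three and ``diagonal'' two-element supports. The rest is routine case analysis and an explicit construction; the implicant/cover vocabulary introduced for \cref{thm:ici-mixer} carries over essentially unchanged, so this theorem can be seen as an exact (and therefore tighter) version of that necessary condition.
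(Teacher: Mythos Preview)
Your proposal is correct. For the sufficiency direction, your explicit construction of $\mu$ is essentially identical to the paper's.

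For necessity, however, you take a somewhat different route than the paper. The paper first invokes \cref{thm:ici-mixer} (since complete RS-awareness implies weak RS-awareness) to conclude that every $\confusionset_{\bg}$ is already the cover of an implicant, hence a sub-cube of cardinality $2^n$ for some $n$; it then argues by a dimension/parameter count that a product distribution over $n$ free coordinates cannot realise \emph{every} distribution on that sub-cube unless $n\leq 1$. Your contrapositive argument is more direct and self-contained: you exhibit a single bad $\bpi$ (uniform on three points, or uniform on a ``diagonal'' pair) whose support violates the sub-cube shape that any product-distribution support must have, so \cref{thm:ici-mixer} is never invoked. The paper's approach makes the logical relationship between the two theorems explicit and reuses prior machinery; yours is a standalone argument that avoids the detour through weak RS-awareness. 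Both reach the same conclusion with comparable effort.
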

Note that if a confusion set is a singleton, then that concept is not affected by any RSs as $\bg\in \confusionset_{\bg}$. 
Therefore, UCI is only completely RS-aware if each concept is affected by at most one RS, and they differ in exactly one variable.

\section{Expressiveness and architecture design}
\label{sec:disentangling_rs}

\begin{table}[h]
    \centering
    \caption{\textbf{Well-designed expressive models can be RS-aware while solving problems without RSs.} We compare an independent model with two expressive models using different losses on a problem with (\XORMN) and without (\TLMN) reasoning shortcuts. 
    We report label accuracy, concept accuracy and concept calibration with expected calibration error. 
    The statistically best model-loss combinations are bolded. For details on experimental setup, see \cref{appendix:experimental-details}. 
    }
    \label{table:results}
    \resizebox{\textwidth}{!}{
        \begin{tabular}{lrrrrrr}
            \toprule\textbf{Method} & \multicolumn{3}{c}{\TLMN (No RS)} & \multicolumn{3}{c}{\XORMN (Has RS)} \\ 
            \cmidrule(lr){2-4} \cmidrule(lr){5-7} 
             & $\text{Acc}_\by \uparrow$ & $\text{Acc}_\bw \uparrow$ & $\text{ECE}_\bw \downarrow$ & $\text{Acc}_\by \uparrow$ & $\text{Acc}_\bw \uparrow$ & $\text{ECE}_\bw \downarrow$ \\ 
            \midrule
            Independent - Semantic Loss &\textbf{99.91\small{$\pm$ 0.07}} & 99.87\small{$\pm$ 0.04} & 0.22\small{$\pm$ 0.06} &99.83\small{$\pm$ 0.07} & \textbf{45.01\small{$\pm$ 50.96}} & \textbf{54.96\small{$\pm$ 49.55}}  \\ 
            Independent - Uniform-KL &70.58\small{$\pm$ 0.00} & 46.32\small{$\pm$ 0.02} & 30.17\small{$\pm$ 0.33} &51.47\small{$\pm$ 0.00} & 46.31\small{$\pm$ 0.00} & 28.69\small{$\pm$ 0.01}  \\ 
            Joint - Semantic Loss &99.86\small{$\pm$ 0.05} & 69.32\small{$\pm$ 5.73} & 29.74\small{$\pm$ 6.10} &99.73\small{$\pm$ 0.13} & \textbf{53.55\small{$\pm$ 15.04}} & 45.03\small{$\pm$ 15.47}  \\ 
            Joint - Uniform-KL &99.86\small{$\pm$ 0.05} & 75.73\small{$\pm$ 0.03} & 0.77\small{$\pm$ 0.09} &99.73\small{$\pm$ 0.14} & 45.61\small{$\pm$ 7.22} & \textbf{6.62\small{$\pm$ 5.78}}  \\ 
            AR - Semantic Loss &\textbf{99.91\small{$\pm$ 0.09}} & \textbf{99.91\small{$\pm$ 0.04}} & \textbf{0.09\small{$\pm$ 0.04}} &\textbf{99.86\small{$\pm$ 0.07}} & \textbf{45.00\small{$\pm$ 50.97}} & \textbf{54.97\small{$\pm$ 49.66}}  \\ 
            AR - Uniform-KL &99.83\small{$\pm$ 0.13} & 88.83\small{$\pm$ 0.10} & 14.98\small{$\pm$ 1.52} &\textbf{99.88\small{$\pm$ 0.06}} & \textbf{50.99\small{$\pm$ 12.61}} & 10.24\small{$\pm$ 7.06}  \\ 
            \bottomrule
            \end{tabular} 
    }
\end{table}

If expressive models allow us to be RS-aware, should we use them for any task?   
\citet{faronius2025independenceissueneurosymbolicai} showed that expressive models fail to learn the correct concept distribution for a problem \emph{without} RSs. 
In particular, they use the \TLMN problem, which is like \XORMN (\cref{ex:xormn}) except using the program $\beta(\bw)=\neg \w_1 \vee \neg \w_2$. 

Why does this problem not contain RSs for independent models?  
The constraint $\knowledge_{0}=\w_1\wedge \w_2$ provides completely supervised information: the only option is $\w_1=\w_2=1$. 
Therefore, given enough data, the model can learn to correctly classify MNIST digits of 1, leaving only the problem of classifying MNIST digits of 0. 
But now, given input digits that are different, the model can learn to recognise which of the pairs $(0, 1)$ is not a 1, and then learn to also classify 0s. 
In \cref{table:results}, we compare the performance of the independent model with two expressive models. 
Like in \citet{faronius2025independenceissueneurosymbolicai}, the independent model obtains high concept accuracy, while the joint model from \cref{sec:xor} obtains high label accuracy but poor concept accuracy. 
This is because the joint model cannot distinguish between the concept configurations $(0, 0)$, $(0, 1)$ and $(1, 0)$: these are simply different output logits. 

However, we find that a well-designed autoregressive model can solve \TLMN with high concept accuracy, while also being RS-aware on \XORMN under \cref{eq:conditionalentropy}, showcased by low expected calibration error under high label accuracy. 
This highlights that it is not as simple as just using any expressive model for NeSy predictors --- we also need to use neural network architectures with the right inductive biases for the problem at hand. 

\section{Conclusion}
We studied under what conditions taking the independence assumption is appropriate for NeSy predictors. 
We proved that models with the independence assumption can only be aware of reasoning shortcuts in very limited settings. 
This limits the reliability of NeSy predictors with independence, meaning they might silently fail out-of-distribution. 
Finally, we showed that expressive models that go beyond the independence assumption can be RS-aware. However, we also found that neural network design is a significant factor. 

Given these results, studying what expressive architecture design suits NeSy predictors is a fruitful direction for future research. 
Existing work used mixtures of independent models \citep{marconatoBEARSMakeNeuroSymbolic2024}, which is especially powerful when using probabilistic circuits \citep{ahmedSemanticProbabilisticLayers2022} and discrete diffusion models \citep{vankrieken2025neurosymbolicdiffusionmodels}, which showed significant improvements over independent models on the RSBench dataset \citep{bortolottineuro}. 
Furthermore, we believe studying benchmarks that go beyond the naive assumption of full observability (Assumption \textbf{A1} in \cref{sec:background}) would help elicit differences between the behaviour of NeSy predictors with independence and those without.

\ifarxiv
\section*{Acknowledgements}
Emile van Krieken was funded by ELIAI (The Edinburgh Laboratory for Integrated Artificial Intelligence), EPSRC (grant no. EP/W002876/1).
Pasquale Minervini was partially funded by ELIAI, EPSRC (grant no.\ EP/W002876/1), an industry grant from Cisco, and a donation from Accenture LLP.
Antonio Vergari was supported by the \say{UNREAL: Unified Reasoning Layer for Trustworthy ML} project (EP/Y023838/1) selected by the ERC and funded by UKRI EPSRC.
\else
\fi

\bibliography{references} 

\begin{thebibliography}{28}
\providecommand{\natexlab}[1]{#1}
\providecommand{\url}[1]{\texttt{#1}}
\expandafter\ifx\csname urlstyle\endcsname\relax
  \providecommand{\doi}[1]{doi: #1}\else
  \providecommand{\doi}{doi: \begingroup \urlstyle{rm}\Url}\fi

\bibitem[Ahmed et~al.(2022)Ahmed, Teso, Chang, Van~den Broeck, and Vergari]{ahmedSemanticProbabilisticLayers2022}
Kareem Ahmed, Stefano Teso, Kai-Wei Chang, Guy Van~den Broeck, and Antonio Vergari.
\newblock Semantic probabilistic layers for neuro-symbolic learning.
\newblock 35:\penalty0 29944--29959, 2022.

\bibitem[Badreddine et~al.(2022)Badreddine, {d'Avila Garcez}, Serafini, and Spranger]{badreddineLogicTensorNetworks2022}
Samy Badreddine, Artur {d'Avila Garcez}, Luciano Serafini, and Michael Spranger.
\newblock Logic {{Tensor Networks}}.
\newblock \emph{Artificial Intelligence}, 303:\penalty0 103649, February 2022.
\newblock ISSN 0004-3702.
\newblock \doi{10.1016/j.artint.2021.103649}.

\bibitem[Bortolotti et~al.(2024)Bortolotti, Marconato, Carraro, Morettin, van Krieken, Vergari, Teso, and Passerini]{bortolottineuro}
Samuele Bortolotti, Emanuele Marconato, Tommaso Carraro, Paolo Morettin, Emile van Krieken, Antonio Vergari, Stefano Teso, and Andrea Passerini.
\newblock A neuro-symbolic benchmark suite for concept quality and reasoning shortcuts.
\newblock In \emph{The Thirty-eight Conference on Neural Information Processing Systems Datasets and Benchmarks Track}, 2024.

\bibitem[Chavira and Darwiche(2008)]{chavira2008probabilistic}
Mark Chavira and Adnan Darwiche.
\newblock On probabilistic inference by weighted model counting.
\newblock \emph{Artificial Intelligence}, 172\penalty0 (6-7):\penalty0 772--799, 2008.

\bibitem[Darwiche and Marquis(2002)]{darwiche2002knowledge}
Adnan Darwiche and Pierre Marquis.
\newblock A knowledge compilation map.
\newblock \emph{Journal of Artificial Intelligence Research}, 17:\penalty0 229--264, 2002.

\bibitem[De~Raedt et~al.(2019)De~Raedt, Manhaeve, Dumancic, Demeester, and Kimmig]{de2019neuro}
Luc De~Raedt, Robin Manhaeve, Sebastijan Dumancic, Thomas Demeester, and Angelika Kimmig.
\newblock Neuro-symbolic= neural+ logical+ probabilistic.
\newblock In \emph{NeSy@ IJCAI}, 2019.

\bibitem[Faronius and {Dos Martires}(2025)]{faronius2025independenceissueneurosymbolicai}
Håkan~Karlsson Faronius and Pedro~Zuidberg {Dos Martires}.
\newblock Independence is not an issue in neurosymbolic ai, 2025.
\newblock URL \url{https://arxiv.org/abs/2504.07851}.

\bibitem[Feldstein et~al.(2024)Feldstein, Dilkas, Belle, and Tsamoura]{feldstein2024mapping}
Jonathan Feldstein, Paulius Dilkas, Vaishak Belle, and Efthymia Tsamoura.
\newblock Mapping the neuro-symbolic ai landscape by architectures: A handbook on augmenting deep learning through symbolic reasoning.
\newblock \emph{arXiv preprint arXiv:2410.22077}, 2024.

\bibitem[Garcez and Lamb(2023)]{garcez2023neurosymbolic}
Artur~d’Avila Garcez and Luis~C Lamb.
\newblock Neurosymbolic ai: The 3 rd wave.
\newblock \emph{Artificial Intelligence Review}, 56\penalty0 (11):\penalty0 12387--12406, 2023.

\bibitem[Giunchiglia et~al.(2023)Giunchiglia, Stoian, Khan, Cuzzolin, and Lukasiewicz]{giunchiglia2023road}
Eleonora Giunchiglia, Mihaela~C{\u{a}}t{\u{a}}lina Stoian, Salman Khan, Fabio Cuzzolin, and Thomas Lukasiewicz.
\newblock Road-r: the autonomous driving dataset with logical requirements.
\newblock \emph{Machine Learning}, 112\penalty0 (9):\penalty0 3261--3291, 2023.

\bibitem[Guo et~al.(2017)Guo, Pleiss, Sun, and Weinberger]{guo2017calibration}
Chuan Guo, Geoff Pleiss, Yu~Sun, and Kilian~Q Weinberger.
\newblock On calibration of modern neural networks.
\newblock In \emph{International conference on machine learning}, pages 1321--1330. PMLR, 2017.

\bibitem[Hitzler et~al.(2022)Hitzler, Eberhart, Ebrahimi, Sarker, and Zhou]{hitzler2022neuro}
Pascal Hitzler, Aaron Eberhart, Monireh Ebrahimi, Md~Kamruzzaman Sarker, and Lu~Zhou.
\newblock Neuro-symbolic approaches in artificial intelligence.
\newblock \emph{National Science Review}, 9\penalty0 (6):\penalty0 nwac035, 2022.

\bibitem[Li et~al.(2023)Li, Huang, and Naik]{li2023scallop}
Ziyang Li, Jiani Huang, and Mayur Naik.
\newblock Scallop: A language for neurosymbolic programming.
\newblock \emph{Proceedings of the ACM on Programming Languages}, 7\penalty0 (PLDI):\penalty0 1463--1487, 2023.

\bibitem[Manhaeve et~al.(2018)Manhaeve, Duman{\v c}i{\'c}, Kimmig, Demeester, and De~Raedt]{manhaeveDeepProbLogNeuralProbabilistic2018}
Robin Manhaeve, Sebastijan Duman{\v c}i{\'c}, Angelika Kimmig, Thomas Demeester, and Luc De~Raedt.
\newblock {{DeepProbLog}}: Neural probabilistic logic programming.
\newblock In Samy Bengio, Hanna~M Wallach, Hugo Larochelle, Kristen Grauman, Nicol{\`o} {Cesa-Bianchi}, and Roman Garnett, editors, \emph{Advances in {{Neural Information Processing Systems}} 31: {{Annual Conference}} on {{Neural Information Processing Systems}} 2018, {{NeurIPS}} 2018, 3-8 {{December}} 2018, {{Montr{\'e}al}}, {{Canada}}}, 2018.

\bibitem[Manhaeve et~al.(2021)Manhaeve, Duman{\v c}i{\'c}, Kimmig, Demeester, and De~Raedt]{manhaeveNeuralProbabilisticLogic2021}
Robin Manhaeve, Sebastijan Duman{\v c}i{\'c}, Angelika Kimmig, Thomas Demeester, and Luc De~Raedt.
\newblock Neural probabilistic logic programming in {{DeepProbLog}}.
\newblock \emph{Artificial Intelligence}, 298:\penalty0 103504, 2021.
\newblock ISSN 0004-3702.
\newblock \doi{10.1016/j.artint.2021.103504}.

\bibitem[Marconato et~al.(2023)Marconato, Teso, Vergari, and Passerini]{marconatoNotAllNeuroSymbolic2023}
Emanuele Marconato, Stefano Teso, Antonio Vergari, and Andrea Passerini.
\newblock Not {{All Neuro-Symbolic Concepts Are Created Equal}}: {{Analysis}} and {{Mitigation}} of {{Reasoning Shortcuts}}.
\newblock In \emph{Thirty-Seventh Conference on Neural Information Processing Systems}, May 2023.

\bibitem[Marconato et~al.(2024)Marconato, Bortolotti, {van Krieken}, Vergari, Passerini, and Teso]{marconatoBEARSMakeNeuroSymbolic2024}
Emanuele Marconato, Samuele Bortolotti, Emile {van Krieken}, Antonio Vergari, Andrea Passerini, and Stefano Teso.
\newblock {{BEARS Make Neuro-Symbolic Models Aware}} of their {{Reasoning Shortcuts}}.
\newblock In \emph{Uncertainty in {{Artificial Intelligenc}}}, February 2024.

\bibitem[Mendez-Lucero et~al.(2024)Mendez-Lucero, Gallardo, and Belle]{mendez2024semantic}
Miguel~Angel Mendez-Lucero, Enrique~Bojorquez Gallardo, and Vaishak Belle.
\newblock Semantic objective functions: A distribution-aware method for adding logical constraints in deep learning.
\newblock \emph{arXiv preprint arXiv:2405.15789}, 2024.

\bibitem[Poole and Wood(2022)]{poole2022probabilistic}
David Poole and Frank Wood.
\newblock Probabilistic programming languages: Independent choices and deterministic systems.
\newblock In \emph{Probabilistic and Causal Inference: The Works of Judea Pearl}, pages 691--712, 2022.

\bibitem[Taisuke(1995)]{taisuke1995statistical}
SATO Taisuke.
\newblock A statistical learning method for logic programs with distribution semantics.
\newblock In \emph{Proceedings of ICLP}, pages 715--729. Citeseer, 1995.

\bibitem[van Krieken et~al.(2023)van Krieken, Thanapalasingam, Tomczak, Van~Harmelen, and Ten~Teije]{van2023nesi}
Emile van Krieken, Thiviyan Thanapalasingam, Jakub Tomczak, Frank Van~Harmelen, and Annette Ten~Teije.
\newblock A-nesi: A scalable approximate method for probabilistic neurosymbolic inference.
\newblock \emph{Advances in Neural Information Processing Systems}, 36:\penalty0 24586--24609, 2023.

\bibitem[van Krieken et~al.(2024)van Krieken, Minervini, Ponti, and Vergari]{van2024independence}
Emile van Krieken, Pasquale Minervini, Edoardo~M Ponti, and Antonio Vergari.
\newblock On the independence assumption in neurosymbolic learning.
\newblock In \emph{Proceedings of the 41st International Conference on Machine Learning}, pages 49078--49097, 2024.

\bibitem[van Krieken et~al.(2025)van Krieken, Minervini, Ponti, and Vergari]{vankrieken2025neurosymbolicdiffusionmodels}
Emile van Krieken, Pasquale Minervini, Edoardo Ponti, and Antonio Vergari.
\newblock Neurosymbolic diffusion models, 2025.
\newblock URL \url{https://arxiv.org/abs/2505.13138}.

\bibitem[Vergari et~al.(2021)Vergari, Choi, Liu, Teso, and Van~den Broeck]{vergari2021compositional}
Antonio Vergari, YooJung Choi, Anji Liu, Stefano Teso, and Guy Van~den Broeck.
\newblock A compositional atlas of tractable circuit operations for probabilistic inference.
\newblock \emph{Advances in Neural Information Processing Systems}, 34:\penalty0 13189--13201, 2021.

\bibitem[Wang et~al.(2023)Wang, Tsamoura, and Roth]{NEURIPS2023_1e83498c}
Kaifu Wang, Efthymia Tsamoura, and Dan Roth.
\newblock On learning latent models with multi-instance weak supervision.
\newblock In A.~Oh, T.~Naumann, A.~Globerson, K.~Saenko, M.~Hardt, and S.~Levine, editors, \emph{Advances in Neural Information Processing Systems}, volume~36, pages 9661--9694. Curran Associates, Inc., 2023.
\newblock URL \url{https://proceedings.neurips.cc/paper_files/paper/2023/file/1e83498c3eafe109a44b12979c2c73db-Paper-Conference.pdf}.

\bibitem[Xu et~al.(2018)Xu, Zhang, Friedman, Liang, and {den Broeck}]{xuSemanticLossFunction2018}
Jingyi Xu, Zilu Zhang, Tal Friedman, Yitao Liang, and Guy {den Broeck}.
\newblock A semantic loss function for deep learning with symbolic knowledge.
\newblock In Jennifer Dy and Andreas Krause, editors, \emph{Proceedings of the 35th {{International Conference}} on {{Machine Learning}}}, volume~80, pages 5502--5511, Stockholmsm{\"a}ssan, Stockholm Sweden, 2018. PMLR.

\bibitem[Yang et~al.(2020)Yang, Ishay, and Lee]{yang2020neurasp}
Zhun Yang, Adam Ishay, and Joohyung Lee.
\newblock Neurasp: Embracing neural networks into answer set programming.
\newblock In \emph{29th International Joint Conference on Artificial Intelligence, IJCAI 2020}, pages 1755--1762. International Joint Conferences on Artificial Intelligence, 2020.

\bibitem[Zombori et~al.(2024)Zombori, Rissaki, Szab{\'o}, Gatterbauer, and Benedikt]{zombori2024towards}
Zsolt Zombori, Agapi Rissaki, Krist{\'o}f Szab{\'o}, Wolfgang Gatterbauer, and Michael Benedikt.
\newblock Towards unbiased exploration in partial label learning.
\newblock \emph{Journal of Machine Learning Research}, 25\penalty0 (412):\penalty0 1--56, 2024.

\end{thebibliography}

\appendix
\section{Proofs}
\label{appendix:proofs}
We start by introducing concept remapping distributions, which give the probability that an RS mixture (\cref{def:rs-mixture} remaps $\bg$ to $\bw$.  
\begin{definition}
	Given a set of deterministic RSs $\mathcal{A}$ and some weight $\bpi\in \Delta^\rsmixtures$, we define the \emph{RS mixture remapping distribution $p_{\mathcal{A}, \bpi}(\bw\mid\bw^*)$} as
	\begin{equation}
		p_{\mathcal{A}, \bpi}(\bw\mid\bw^*) := \sum_{i=1}^\rsmixtures \pi_i \mathbbone[\bw=\alpha_i(\bw^*)].
	\end{equation}
\end{definition}
Note that the support of $p_{\mathcal{A}, \bpi}(\bw\mid\bx)$ is equal to $\confusionset_{\bg}$ if $0<\pi_i < 1$ for all $i$. 
Furthermore, as a simple base case where there is only a simple concept remapping $\mathcal{A}=\{\alpha\}$, then $p_{\{\alpha\}, 1}(\bw\mid\bg) = \mathbbone[\bw=\alpha(\bg)]$ is a deterministic distribution.

\begin{lemma}
	The UCI model class can represent the RS mixture distribution $p_{\mathcal{A}, \bpi}(\bw\mid\bx)$ if and only if $p_{\mathcal{A}, \bw}(\bw\mid\bw^*)$ is a factorised distribution for all $\bw^*\in\text{supp}^*(\bw)$. 
\end{lemma}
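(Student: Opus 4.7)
The plan is to exploit the fact that the UCI model class parametrises \emph{exactly} the set of conditional distributions $p(\bw\mid\bx)$ that factorise over the $k$ binary variables for every fixed $\bx$. This is immediate from \cref{def:uni-indep-distrib}: any product of independent Bernoullis lies in the class (pick the appropriate $\mu$), and conversely every member is such a product. So the problem reduces to translating a factorisation statement on the domain $\mathcal{X}$ into one on $\text{supp}^*(\bw)$.

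First, I would observe that under assumption \textbf{A1}, $f^{-1}$ is well-defined on any $\bx$ arising from a ground-truth world, and that
\[
p_{\mathcal{A}, \bpi}(\bw\mid\bx) \;=\; \sum_{i=1}^{\rsmixtures} \pi_i\, \mathbbone[\alpha_i(f^{-1}(\bx))=\bw] \;=\; p_{\mathcal{A}, \bpi}(\bw\mid f^{-1}(\bx)),
\]
i.e.\ the RS mixture distribution over $\bw$ given $\bx$ coincides pointwise with the RS mixture \emph{remapping} distribution evaluated at $\bg=f^{-1}(\bx)$. Because $f^{-1}$ is a bijection onto $\text{supp}^*(\bw)$, a statement holding for all $\bx$ such that $f^{-1}(\bx)\in\text{supp}^*(\bw)$ is equivalent to the corresponding statement for all $\bg\in\text{supp}^*(\bw)$.

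For the forward direction, I assume that there exists $\mu: \mathcal{X}\to[0,1]^\wdim$ with $p^{\indep}_\mu(\bw\mid\bx) = p_{\mathcal{A},\bpi}(\bw\mid\bx)$ for every $\bx$. By definition the left-hand side is a product of Bernoullis in $\bw$, hence $p_{\mathcal{A},\bpi}(\bw\mid\bx)$ factorises for each $\bx$. Passing through the identification above, $p_{\mathcal{A},\bpi}(\bw\mid\bw^*)$ factorises for every $\bw^*\in\text{supp}^*(\bw)$. For the backward direction, if each $p_{\mathcal{A},\bpi}(\cdot\mid\bw^*)$ factorises, extract its per-coordinate marginals $\nu_i(\bw^*):=\sum_{\bw: \w_i=1} p_{\mathcal{A},\bpi}(\bw\mid\bw^*)$ and set $\mu(\bx)_i := \nu_i(f^{-1}(\bx))$ for $\bx$ in the image of $f$ (and arbitrarily elsewhere). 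The factorisation hypothesis exactly gives $p^{\indep}_\mu(\bw\mid\bx) = \prod_i \nu_i(\bg)^{\w_i}(1-\nu_i(\bg))^{1-\w_i} = p_{\mathcal{A},\bpi}(\bw\mid\bg) = p_{\mathcal{A},\bpi}(\bw\mid\bx)$, exhibiting a UCI representative.

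The only non-routine point is the bookkeeping around the domain of $\mu$: the UCI class quantifies over functions on all of $\mathcal{X}$, whereas the RS mixture is only intrinsically determined on the image of $f$. This is handled by noting that we can extend $\mu$ arbitrarily off this image without affecting the equality of conditionals at the relevant inputs, so the obstacle is purely notational rather than mathematical.
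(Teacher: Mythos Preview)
Your proof is correct and follows essentially the same route as the paper's: both directions hinge on the identity $p_{\mathcal{A},\bpi}(\bw\mid\bx)=p_{\mathcal{A},\bpi}(\bw\mid f^{-1}(\bx))$, with the backward direction constructing $\mu$ from the per-coordinate marginals of the remapping distribution. One small terminological slip: $f^{-1}$ is a \emph{surjection} onto $\text{supp}^*(\bw)$ rather than a bijection (distinct style variables $\bs$ yield distinct $\bx$ with the same $\bg$), but only surjectivity is needed for your equivalence of quantifiers, so the argument is unaffected; your explicit handling of the domain of $\mu$ off the image of $f$ is in fact a point the paper leaves implicit.
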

\begin{proof}
	\textbf{$\implies$} Assume the UCI model class represents $p_{\mathcal{A}, \bpi}(\bw\mid\bx)$ with function $\mu$. 
	Then for all $\bx\in \text{supp}^*(\bx)$,
	\begin{align*}
		p^\indep_\mu(\bw\mid\bx) &= p_{\mathcal{A}, \bpi}(\bw\mid\bx)
		=\sum_{i=1}^\rsmixtures \pi_i p_{\alpha_i}(\bw\mid\bx) 
		= \sum_{i=1}^\rsmixtures  \pi_i \mathbbone[\bw=\alpha(f^{-1}(\bx))]. 
	\end{align*}
	Letting $\bg=f^{-1}(\bx)$, which must be in $\text{supp}^*(\bw)$, we have 
	\begin{align}
		\label{eq:needs-to-be-factorised}
		p^\indep_\mu(\bw\mid\bx) &= \sum_{i=1}^\rsmixtures \pi_i \mathbbone[\bw=\alpha(\bw^*)]=p_{\mathcal{A}, \bpi}(\bw\mid\bw^*) 
	\end{align}
	Since $p^\indep_\mu(\bw\mid\bx)$ is a factorised distribution, $p_{\mathcal{A}, \bpi}(\bw\mid\bw^*)$ must be too. 
	
	\textbf{$\impliedby$} Assume $p_{\mathcal{A}, \bw}(\bw\mid\bw^*)$ is factorised. 
	Then, for each $\bx\in \text{supp}(\bx)$, let $\bw^*=f^{-1}(\bx)$, 
	\begin{align}
    \label{eq:mix-distr-equiv}
		p_{\mathcal{A}, \bpi}(\bw\mid\bx)&=\sum_{i=1}^\rsmixtures  \pi_i \mathbbone[\bw=\alpha(f^{-1}(\bx))] 
		= \sum_{i=1}^\rsmixtures \pi_i \mathbbone[\bw=\alpha(\bw^*)]
		=p_{\mathcal{A}, \bpi}(\bw\mid\bw^*), 
	\end{align}
	and hence $p_{\mathcal{A}, \bpi}(\bw\mid\bx)$ is also factorised. Therefore, it can be represented by the product of its marginals. 

	In particular, we define the function $\mu$ to equal the vector of marginals of $p_{\mathcal{A}, \bpi}(\bw\mid\bx)$: 
	\begin{align}
		\mu(\bx)_j =& p_{\mathcal{A}, \bpi}(\w_i=1\mid\bx) 
		=\sum_{i=1}^\rsmixtures \pi_i \mathbbone[\alpha_i(f^{-1}(\bw))_j = 1], 
	\end{align}
	which constructs a distribution $p^\indep_\mu(\bw\mid\bx)$ in UCI. 
\end{proof}

\begin{theorem}
	Given a set of concept remappings $\mathcal{A}$, if the UCI model class is (weakly) RS-aware over $\mathcal{A}$, then for all ground-truth worlds $\bg\in \text{supp}^*(\bw)$, the confusion set $\confusionset_{\bg}$ is equal to the cover of an implicant $\implicant$ of $\knowledge_{y}$, $y=\beta(\bg)$.  
\end{theorem}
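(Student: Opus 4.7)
The plan is to chain the lemma just proved with a structural fact about factorised Bernoulli distributions. By that lemma, if UCI is weakly RS-aware over $\mathcal{A}$ via weights $\bpi$ with $0 < \pi_i < 1$, then the mixture remapping distribution $p_{\mathcal{A}, \bpi}(\bw \mid \bw^*)$ is factorised for every $\bw^* \in \text{supp}^*(\bw)$. It therefore suffices to translate this factorisation into the conclusion about confusion sets and implicant covers, which I would do in two short steps: identify the support of a factorised Bernoulli as the cover of some incomplete concept, and check that this incomplete concept is an implicant of $\knowledge_y$.

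For the first step, fix $\bg \in \text{supp}^*(\bw)$. Since $0 < \pi_i < 1$ for all $i$, the support of $p_{\mathcal{A}, \bpi}(\cdot \mid \bg)$ is exactly $\confusionset_{\bg} = \{\alpha(\bg) : \alpha \in \mathcal{A}\}$, as noted in the excerpt. On the other hand, write the factorised form as $\prod_{i=1}^\wdim \mu_i^{\w_i}(1 - \mu_i)^{1 - \w_i}$ for some $\mu \in [0,1]^\wdim$ depending on $\bg$. Its support decomposes coordinatewise as $\prod_{i=1}^\wdim S_i$, where $S_i = \{0\}$ if $\mu_i = 0$, $S_i = \{1\}$ if $\mu_i = 1$, and $S_i = \{0,1\}$ otherwise. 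Letting $D = \{i : \mu_i \in \{0,1\}\}$ and defining the incomplete concept $\implicant$ by $(\implicant)_i = \mu_i$ for $i \in D$, this product set is precisely the cover $\conceptspace_{\implicant}$, and hence $\confusionset_{\bg} = \conceptspace_{\implicant}$.

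For the second step, each $\alpha \in \mathcal{A}$ is a concept remapping, so $\beta(\alpha(\bg)) = \beta(\bg) = y$ and therefore $\confusionset_{\bg} \subseteq \conceptspace_y$. Combined with $\confusionset_{\bg} = \conceptspace_{\implicant}$, this yields $\conceptspace_{\implicant} \subseteq \conceptspace_y$, which is exactly the definition of $\implicant$ being an implicant of $\knowledge_y$, closing the argument. The only real subtlety is the first step: even though a factorised Bernoulli can place arbitrary masses on its support, the support itself is forced to be a combinatorial cube $\prod_i S_i$, which collapses precisely to a cover of some incomplete concept. Once that combinatorial observation is in hand, the rest is bookkeeping with the RS definitions.
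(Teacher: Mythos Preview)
Your proof is correct and follows essentially the same route as the paper's: identify the support of the UCI representation with $\confusionset_{\bg}$, then argue that the support of a factorised Bernoulli is the cover of an incomplete concept which, since $\confusionset_{\bg}\subseteq\conceptspace_y$, must be an implicant of $\knowledge_y$. The only cosmetic difference is that the paper outsources your coordinatewise ``combinatorial cube'' observation to Theorem~4.3 of \citet{van2024independence}, whereas you spell it out directly; your version is therefore self-contained.
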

\begin{proof}
	First, assume the UCI model class mixes over $\mathcal{A}$. 
	We assume this is done for weight $\bpi$ using function $\mu$.  
	That is, using \cref{eq:mix-distr-equiv}, $p_\mu(\bw\mid\bx) =p_{\mathcal{A}, \bpi}(\bw\mid\bx)=p_{\mathcal{A}, \bpi}(\bw\mid\bw^*=f^{-1}(\bx))$ for all $\bx\in \text{supp}^*(\bx)$. 
	Therefore, the support of $p_\mu(\bw\mid\bx)$ is equal to that of $p_{\mathcal{A}, \bpi}(\bw\mid\bx)$, which is $\confusionset_{\bw^*}$.

	By condition \ref{eq:reasoning-shortcut}, we have that for all $\bw\in \confusionset_{\bg}$, $\beta(\bw)=\beta(\bg)=y$. 
	Therefore, $p_\mu(\bw\mid\bx)$ is a possible independent distribution for $\knowledge_y$. That is, for all $\bw\in \conceptspace$ such that $p_\mu(\bw\mid\bx) > 0$, we have that $\knowledge_y(\bw)=1$. 
	Then by Theorem 4.3 from \citet{van2024independence}, we have that the deterministic component $\implicant$ of $p_\mu(\bw\mid\bx)$  is an implicant of $\knowledge_y$. Hence, all concepts in $\confusionset_{\bg}$ extend $\implicant$. 
	Furthermore, the support of $p_\mu(\bw\mid\bx)$ is equal to the cover of $\implicant$ (see proof of Theorem 4.3 in \citet{van2024independence}). Hence, $\confusionset_{\bg}=\conceptspace_{\implicant}$. 
\end{proof}

\begin{theorem}
	Let $\mathcal{A}$ be a set of concept remappings. Then the UCI model class is completely reasoning shortcut aware over $\mathcal{A}$ if and only if for all $\bg\in \textnormal{supp}^*(\bw)$, $\confusionset_{\bg}$ is a singleton, or $\confusionset_{\bg}=\{\bw_1, \bw_2\}$, where $\bw_1$ and $\bw_2$ differ in exactly one variable.  
\end{theorem}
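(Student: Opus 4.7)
The plan is to reduce the theorem to the preceding lemma, which says that UCI represents $p_{\mathcal{A},\bpi}$ if and only if the mixture remapping distribution $p_{\mathcal{A},\bpi}(\bw\mid\bw^*)$ factorises across the $\wdim$ coordinates for every $\bw^*\in\mathsf{supp}^*(\bw)$. Complete RS-awareness then amounts to the statement that this factorisation holds for every $\bpi\in\Delta^{\rsmixtures-1}$ simultaneously. The key tool used throughout is a structural fact about factorised Bernoulli distributions: the support of any $p^{\indep}_\mu$ has the form $\prod_{i=1}^\wdim S_i$ with $S_i\in\{\{0\},\{1\},\{0,1\}\}$, i.e., it is always a ``subcube'' of $\{0,1\}^\wdim$. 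This is immediate from the definition in \cref{eq:universal-ci} and is also implicit in Theorem~4.3 of \citet{van2024independence}.

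For the direction $(\Leftarrow)$, I would fix $\bg$ and $\bpi$ and check factorisation by inspection. If $\confusionset_{\bg}=\{\bw_0\}$ is a singleton then $p_{\mathcal{A},\bpi}(\cdot\mid\bg)$ is deterministic and trivially factorises. If $\confusionset_{\bg}=\{\bw_1,\bw_2\}$ with $\bw_1,\bw_2$ differing only in coordinate $j$, setting $q_1=\sum_{i:\alpha_i(\bg)=\bw_1}\pi_i$ gives a two-atom distribution whose marginals are deterministic on every coordinate $\neq j$ and Bernoulli$(1-q_1)$ on coordinate $j$; the product of these marginals reconstructs $p_{\mathcal{A},\bpi}(\cdot\mid\bg)$ exactly. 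Applying the lemma then yields complete RS-awareness.

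For the direction $(\Rightarrow)$, I would fix any $\bg$ with $|\confusionset_{\bg}|\geq 2$ and exploit the freedom to pick $\bpi$. For any two distinct $\bw_1,\bw_2\in\confusionset_{\bg}$, by definition of the confusion set there exist indices $i,j$ with $\alpha_i(\bg)=\bw_1$ and $\alpha_j(\bg)=\bw_2$; choosing $\pi_i=\pi_j=\tfrac12$ and all other weights zero gives a legal $\bpi\in\Delta^{\rsmixtures-1}$ for which $p_{\mathcal{A},\bpi}(\cdot\mid\bg)=\tfrac12\mathbbone[\bw=\bw_1]+\tfrac12\mathbbone[\bw=\bw_2]$. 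By complete RS-awareness and the lemma, this must factorise, so its support $\{\bw_1,\bw_2\}$ is a subcube; but a two-element subcube has exactly one free coordinate, forcing $\bw_1$ and $\bw_2$ to differ in exactly one variable. To rule out $|\confusionset_{\bg}|\geq 3$, I would take three distinct elements and apply the previous conclusion pairwise: letting $j_{12}$ and $j_{13}$ be the unique coordinates where $\bw_2$ and $\bw_3$ respectively differ from $\bw_1$, either $j_{12}=j_{13}$ (forcing $\bw_2=\bw_3$) or $j_{12}\neq j_{13}$ (so $\bw_2$ and $\bw_3$ disagree on both, contradicting pairwise agreement on all but one coordinate). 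Either way is a contradiction, so $|\confusionset_{\bg}|\leq 2$.

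The main obstacle is the $(\Rightarrow)$ direction, and specifically the step that extracts useful two-atom mixtures from the full mixture family: one must be careful that the chosen $\bpi$ lies in $\Delta^{\rsmixtures-1}$ (including its boundary), and that multiple remappings sending $\bg$ to the same point do not spoil the argument. Once the subcube characterisation of factorised supports is in hand, the remaining combinatorial step reducing to $|\confusionset_{\bg}|\leq 2$ is a short case split and requires no further machinery.
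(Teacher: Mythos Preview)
Your proof is correct. The $(\Leftarrow)$ direction is essentially the paper's argument repackaged through the lemma: the paper also builds $\mu$ explicitly coordinate-by-coordinate, deterministic on the agreeing coordinates and equal to $\sum_i \pi_i \alpha_i(\bg)_j$ on the single free one.

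The $(\Rightarrow)$ direction is a genuinely different route. The paper first invokes \cref{thm:ici-mixer} (complete $\Rightarrow$ weak) to conclude that every $\confusionset_{\bg}$ is already an implicant cover, i.e.\ a subcube with $\log_2|\confusionset_{\bg}|$ free coordinates. It then picks a bijection between a subset $\mathcal{A}'\subseteq\mathcal{A}$ and $\confusionset_{\bg}$, zeroes the other weights, and argues that complete RS-awareness forces UCI to realise \emph{every} distribution on that subcube; since a product of Bernoullis on $d$ free coordinates has only $d$ parameters, this is possible only for $d\leq 1$. Your argument bypasses \cref{thm:ici-mixer} entirely: you manufacture two-atom mixtures on arbitrary pairs, use the subcube-support fact to force Hamming distance~$1$ pairwise, and then finish with a short pigeonhole on the differing coordinates. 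What the paper's route buys is brevity once \cref{thm:ici-mixer} is in hand; what yours buys is self-containment and an explicit witness ($\bpi$ supported on two remappings) of the obstruction, which makes the failure mode concrete. Both rely on the closed simplex in the definition of complete RS-awareness, so your remark about boundary $\bpi$ is apt and matches the paper's own use of zero weights.
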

\begin{proof}
	\textbf{$\implies$} 
	Assume that the UCI model class is completely RS-aware over $\mathcal{A}$. 
	By Theorem \ref{thm:ici-mixer}, we have that, for all $\bg \in \text{supp}^*(\bw)$, $\confusionset_{\bg}$ is equal to the cover of an implicant $\implicant$ of $\knowledge_{y}$, where $y=\beta(\bg)$. 
	Furthermore, by the definition of $\confusionset_{\bg}$, we have that $|\mathcal{A}|\geq|\confusionset_{\bg}|$.

	Consider some subset $\mathcal{A}'\subseteq \mathcal{A}$ of size $|\confusionset_{\bg}|$ for which the confusion set $\confusionset_{\bg}'=\confusionset_{\bg}$ is the same as for $\mathcal{A}$. 
	We will next construct a weight vector $\bpi$ for $\mathcal{A}$. 
	For all $\alpha_i\not\in \mathcal{A}'$, we assign $\pi_i=0$. 
	Now, we have a unique concept remapping $\alpha\in \mathcal{A}'$ such that $\alpha(\bg)=\bw$. 
	Identify this $\alpha$ as $\alpha_{\bw}$ and corresponding weight as $\pi_{\bw}$. Then,
	\begin{align*}
		p_{\mathcal{A}, \bpi}(\bw\mid\bg) &= \pi_\bw \mathbbone[\bw\in \confusionset_{\bg}] 
	\end{align*}
	That is, to be completely RS-aware necessitates representing any distribution over $\confusionset_{\bg}$. By the fact that $\confusionset_{\bg}$ is the cover of an implicant, it is the set of all concepts with $\log_2(|\confusionset_{\bg}|)$ free variables. 
	Independent distributions can only represent distributions over 0 or 1 free variables, and so we require $|\confusionset_{\bg}|\leq 2$. %

	If there are no free variables, we have the singleton $\mathcal{C}_{\bw^*}=\{\bw_1\}$. 
	If there is one free variable, then by the fact that $\mathcal{C}_{\bw^*}$ is the cover of an implicant, we have the two-element set $\mathcal{C}_{\bw^*}=\{\bw_1, \bw_2\}$ that differ in some dimension $i$. 

	\textbf{$\impliedby$} 
	First, we show that for all $\bg\in \text{supp}^*(\bw)$, $\confusionset_{\bg}$ is a cover of an implicant. In the singleton case, the implicant is the complete concept $\bw_1$, and in the case with two concepts $\bw_1$ and $\bw_2$, the implicant is the incomplete concept consisting of all variables except the one where $\bw_1$ and $\bw_2$ differ, namely $i$. 
	The cover of this implicant consists of $\bw_1$ and $\bw_2$. %

	Let $\bpi$ be a weight vector for $\mathcal{A}$. 
	Then, for each $\bx\in \text{supp}^*(\bx)$ and letting $\bg=f^{-1}(\bx)$, we define
	\begin{equation}
		\mu(\bx)_j = \begin{cases}
			\w_{1,j} & \text{if } \confusionset_{\bg}=\{\bw_1\} \\
			\w_{1, j} & \text{if } \confusionset_{\bg}=\{\bw_1, \bw_2\} \text{ and } \w_{1, j}=\w_{2, j} \\
			\sum_{i=1}^\rsmixtures \pi_i \alpha_i(\bw^*)_j & \text{if } \confusionset_{\bg}=\{\bw_1, \bw_2\}  \text{ and } \w_{1, j}\neq \w_{2, j}.
		\end{cases}
	\end{equation}
	Then we show that $p_\mu(\bw\mid\bx)$ is equal to $p_{\mathcal{A}, \bpi}(\bw\mid\bx)$. Using Equation~\ref{eq:needs-to-be-factorised}, 
	\begin{align*}
		p_{\mathcal{A}, \bpi}(\bw\mid\bx) &= \sum_{i=1}^\rsmixtures \pi_i \mathbbone[\bw=\alpha_i(\bw^*)] 
	\end{align*}
	First, assuming $\confusionset_{\bg}=\{\bw_1\}$, we have that %
	\begin{align*}
		p_{\mathcal{A}, \bpi}(\bw\mid\bx) = \mathbbone[\bw=\bw_1] \sum_{i=1}^\rsmixtures \pi_i  
		= \mathbbone[\bw=\bw_1]
		= p_\mu(\bw\mid\bx).
	\end{align*}
	Next, assuming $\confusionset_{\bg}=\{\bw_1, \bw_2\}$, then $\bw_1$ and $\bw_2$ differ only on variable $j$, one being 1 and the other 0. 
	Then we have 
	\begin{align*}
		 p_{\mathcal{A}, \bpi}(\bw\mid\bx)
		 &= \mathbbone[\bw_{\setminus j}=\bw_{1, \setminus j}] \sum_{i=1}^\rsmixtures \pi_i \mathbbone[\w_j=\alpha_i(\bw^*)_j] \\
		&= \mathbbone[\bw_{\setminus j}=\bw_{1, \setminus j}] \left(\sum_{i=1}^\rsmixtures \pi_i \alpha_i(\bw^*)_j\right)^{\w_j} \left(1-\sum_{i=1}^\rsmixtures \pi_i \alpha_i(\bw^*)_j\right)^{1-\w_j} \\
		&= \mathbbone[\bw_{\setminus j}=\bw_{1, \setminus j}]\mu(\bx)_j^{\w_j} (1-\mu(\bx)_j)^{1-\w_j} 
		= p_\mu(\bw\mid\bx).
	\end{align*}
\end{proof}

\section{Minimising the uniform-KL for independent models}
\label{appendix:further-plots}
As we show in \cref{fig:independent_kl}, using the KL-divergence and neural network architecture of \cref{sec:train-rs-aware-xor} adapted to the independence assumption, we fail to learn \emph{anything} beyond the model's initialisation. 
This might seem like a bug, but it is not: under independence, the KL-divergence is minimised in the parity problem when the probability of $\w_1$ and $\w_2$ are both $0.5$, regardless of the input and label.\footnote{For the case $y=0$ in which the MNIST digits are the same, the KL-divergence under independence is equal to $\frac{1}{2}\left(-\log 2\mu_{\w_1} \mu_{\w_2} - \log 2(1-\mu_{\w_1})(1-\mu_{\w_2})\right)$, which is a convex function with unique minimiser at $\mu_{\w_1}=\mu_{\w_2}=0.5$.}
\begin{figure}
	\centering
	\includegraphics[width=0.9\textwidth]{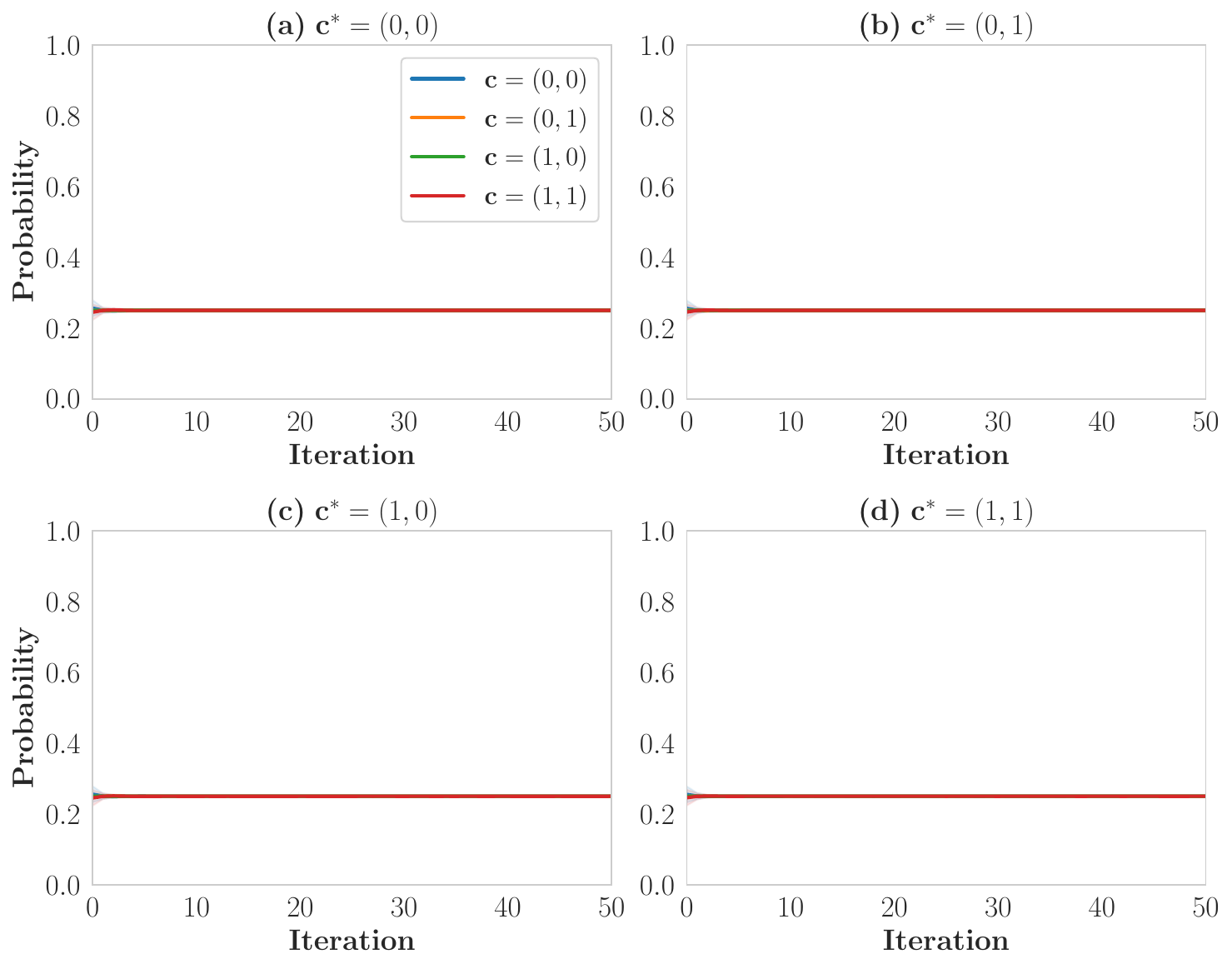}
	\caption{\textbf{An independent distribution fails to learn anything under RS-aware loss functions.} The evolution of test data probability predictions over the four possible concepts by minimising the KL-divergence in Equation~\ref{eq:conditionalentropy} for a conditionally independent model. }
	\label{fig:independent_kl}
\end{figure}

\section{Experimental details}
\label{appendix:experimental-details}
\textbf{Dataset construction.} We take the MNIST dataset, filter by the digits 0 and 1, and then construct pairs of digits by creating a random permutation of the remaining digits. 
This ensures each digit only appears once in each pair, and there is no test-data leakage. 
We used the standard MNIST train-test split for creating the test set. 
Then, we find the label using the programs $\beta$ for both the \XORMN and \TLMN problems.

\textbf{Training details.} Our code is available at \url{https://github.com/HEmile/independence-vs-rs/tree/main}. All experiments are run for 5 epochs on the training data. 
We did not do any hyperparameter tuning: all experiments are run with a learning rate of $0.0001$ with Adam and a batch size of 64. 

\textbf{Statistics.} All results are averaged over 20 random seeds, and report the mean and standard deviation of the results. 
In \cref{table:results}, we bold results that are statistically indistinguishable at $p=0.05$ from the best performing model-loss combination using an unpaired Mann-Whitney U test. 
We highlight that several results, especially for \XORMN, are highly bi-modal, meaning sometimes even at 20 runs there is no statistically significant difference despite a large difference in averages.

\textbf{Model architectures.} 
We use the standard LeNet CNN architecture as the base for all our models in both \XORMN and \TLMN. This architecture contains a CNN encoder and an MLP classifier. More precisely, this uses 2 convolutional layers with maxpooling and ReLU activations which are concatenated into a 256-dimensional embedding. Then, we use a hidden layer to a 120-dimensional embedding and to an 84-dimensional embedding, each with ReLU activations. Finally, we use a sigmoid output layer which outputs a single logit activated by the sigmoid function. 

In particular, 
\begin{itemize}
    \item \textbf{Independent model:} An independent model is given as $p_\btheta^\indep(\bw\mid \bx) := p_\btheta(\w_1\mid \bx)\cdot p_\btheta(\w_2\mid \bx)$. We use a single LeNet model for both digits, that is, $p_\btheta(\w_1\mid \bx)$ and $p_\btheta(\w_2\mid \bx)$ are the same model. 
    Therefore, the independent model is disentangled in the sense of \cref{footnote:disentangled}. 
    We adopt the architecture to give a single logit as output, and apply a sigmoid to get the probability for the digit being 1, and 1 minus that for the digit being 0. 
    \item \textbf{Joint model:} The joint model directly outputs a distribution over all 4 concept configurations using a model $p_\btheta(\bw\mid \bx)$. We adapt the LeNet architecture as follows: We use a shared CNN encoder for both digits to encode the MNIST images. Then, we concatenate the two resulting embeddings, and feed them through the classifier. The classifier uses 4 outputs, for each combination of the two digits $(0, 0), (0, 1), (1, 0), (1, 1)$. 
    \item \textbf{Autoregressive model:} Autoregressive models are defined as $p_\btheta^{AR}(\bw\mid \bx) := p_\btheta(\w_1\mid \bx)\cdot p_\btheta(\w_2\mid \bx, \w_1)$. Similar to the joint model, we use a shared CNN encoder for both digits to encode the MNIST images. 
    We implement the terms $p_\btheta(\w_1\mid \bx)$ and $p_\btheta(\w_2\mid \bx, \w_1)$ with a shared classifier that, like the independent model, outputs a single logit and applies a sigmoid. 
    However, we add four extra inputs to the classifier. 
    For the term $p_\btheta(\w_1\mid \bx)$, the first of these four inputs is always 1, and the rest are 0. This indicates that the classifier is dealing with the left digit. 
    For the term $p_\btheta(\w_2\mid \bx, \w_1)$, the second input is 1 if $\w_1=0$, while the third input is 1 if $\w_1=1$. Furthermore, its fourth input is a 1-dimensional embedding of the left digit using a simple linear layer. 
    This allows the classifier to condition on the contents of the left digit. 
\end{itemize}
We note that the design of the autoregressive model is quite specific. 
We also tried different designs that simply concatenated the two digits, like for the joint model. 
However, we found that when doing this for \TLMN, the autoregressive model would become overly confident in a single valid concept for the class $y=1$, just like the joint model. 
By restricting the classifier of the second digit to only a 1-dimensional embedding of the left digit, we prevent the model from overfitting to a single concept. 

\textbf{Expected calibration error.} 
The expected calibration error (ECE) \citep{guo2017calibration} is a measure for assessing how well a probabilistic classifier knows what it does not know. 
It understands the probability given by the probabilistic classifier as predicting the ratio of how often it would be wrong on unseen data. 
For instance, for all test data points on which the classifier predicts (around) 0.5 probability, it is indeed right on half of the cases, and wrong on the other half. 
Similarly, if the classifier is confident, e.g. close to 1 probability for the positive class, it should also rarely be a negative class. 
We used the existing implementation from \citep{marconatoBEARSMakeNeuroSymbolic2024}. 

The ECE formalises this by binning predictions using $K$ equally spaced probability bins. In particular, let $\text{Acc}(\mathcal{D}, f)$ measure the accuracy of the classifier $f$ on binary classification dataset $\mathcal{D}$, and let $\text{prob}(\mathcal{D}, f)$ be the average predicted probability of $f$ on the dataset. Then the ECE is defined as
\begin{equation}
    \text{ECE}(\mathcal{D}, f) := \sum_{i=1}^K \frac{|\mathcal{D}_i|}{|\mathcal{D}|}\left|\text{Acc}(\mathcal{D}_i, f) - \text{Conf}(\mathcal{D}_i, f) \right|
\end{equation}
where $\mathcal{D}_i$ is the subset of datapoints $(\bx, \bw)\in \mathcal{D}$ on which a probability between $\frac{i}{K}$ and $\frac{i+1}{K}$ is predicted by the classifier $f$. 
In our evaluation, we micro-average over the left and the right digits by simply concatenating the datasets.

\end{document}